\xpatchcmd{\proof}{\itshape}{\normalfont\proofnamefont}{}{}
\newcommand{\proofnamefont}{\bfseries}
\newtheorem{proposition}{Proposition}
\newtheorem{corollary}[proposition]{Corollary}
\newtheorem{lemma}[proposition]{Lemma}
\newtheorem{theorem}[proposition]{Theorem}
\newtheorem{assumption}{Assumption}
\title{Logarithmic regret for  parameter-free online logistic regression}
\author{%
 Joseph de Vilmarest\\ 
 \texttt{josephdevilmarest@gmail.com}\\
 LPSM, Sorbonne Université\\
 4 Place Jussieu, 75005 Paris, France
\And Olivier Wintenberger\\
 \texttt{olivier.wintenberger@upmc.fr}\\
 LPSM, Sorbonne Université\\
 4 Place Jussieu, 75005 Paris, France
}
\newcommand{\R}{\mathbb{R}}
\begin{document}

\maketitle

\begin{abstract}%
We consider online optimization procedures in the context of logistic regression, focusing on the Extended Kalman Filter (EKF). We introduce a second-order algorithm close to the EKF, named Semi-Online Step (SOS), for which we prove a $\mathcal{O}(\log(n))$ regret in the adversarial setting, paving the way to similar results for the EKF. This regret bound on SOS is the first for such parameter-free algorithm in the adversarial logistic regression. We prove for the EKF in constant dynamics a $\mathcal{O}(\log(n))$ regret in expectation and in the well-specified logistic regression model.
\end{abstract}

\keywords{Kalman filter, Logistic Regression, Online Optimization}

\section{Introduction}
In the convex online optimization literature \citep{hazan2016introduction}, a crucial issue is the tuning of parameters. Our aim is to develop parameter-free algorithms in the context of logistic regression. One observes $y_t\in \{-1,1\}$ recursively through time $t=1,2,\ldots$. At each instance $t-1$, the objective is to construct a prediction of the next value $y_{t}$. In hand, we have explanatory variables $X_t$ in $\R^d$ along with the past pairs $(X_s,y_s)_{s<t}$. We reduce the prediction to a $d$-dimensional optimization problem thanks to the logistic loss function
$$
\ell_{t}(y_{t},\hat \theta_{t})=\log(1+ \exp(-y_t \hat \theta_t^T X_t)),  \qquad t=1,2,\ldots,
$$
where $\hat \theta_t\in \R^d$, $t=1,2,\ldots$, are provided by a recursive algorithm. 
The aim of online convex optimization is to provide regret bounds on the cumulative losses
$\sum_{t=1}^n\ell_t(y_t,\hat \theta_t)$ for algorithms whose recursive update step is of constant complexity.

The logistic loss is exp-concave, property that guarantees the existence of online procedures achieving $O(\log(n))$ regret in the adversarial setting. The seminal paper of  \cite{hazan2007logarithmic} proposed two such algorithms, Online Newton Step and Follow The Approximate Leader, that achieve this rate of convergence. Both methods require the knowledge of some constants unknown in practice, namely the constant of exp-concavity and an upper-bound on the gradients of the losses. They also require a projection step on a convex set of finite diameter. We consider these methods as localized ones because they use the strongly convex paraboloid local approximation of any exp-concave functions stated in Lemma 3 in \cite{hazan2007logarithmic}.

On the contrary, some recent papers \citep{bach2013non,gadat2017optimal,godichon2018lp} propose global algorithms in the stochastic setting.  \cite{bach2013non} provide sharp regret bounds for a two-step procedure where the crucial step is the averaging of a Stochastic Gradient Descent (SGD) with constant learning rate that has to be tuned. In \cite{gadat2017optimal,godichon2018lp} the authors propose  non-asymptotic regret bounds with large constants of the averaging of a SGD with more robust learning rates that does not need to be tuned. Our results have the same flavor on a very popular online algorithm, the Extended Kalman Filter (EKF), whose non asymptotic properties have not yet been studied.

For linear regression, Kalman filters as originally described in \cite{kalman1961new} present a Bayesian perspective. The idea is to estimate the conditional expectation of the future state and its variance, given a prior on the initial state and past observations that follow a dynamic model.  Kalman recursion is exactly the ridge regression estimator, see \cite{diderrich1985kalman}, so Kalman filter achieves a $\mathcal{O}(\log(n))$ regret for quadratic losses in adversarial setting. Note that the global strong convexity of the loss is crucial in the analysis of the regret in \cite{cesa2006prediction}.

The Extended Kalman Filter (EKF) of \cite{fahrmeir1992posterior} yields an online parameter-free algorithm for logistic regression. More generally, EKF works in any misspecified Generalized Linear Model as defined in \cite{rigollet2012kullback}. Recently, the equivalence between Kalman filtering under constant dynamics and Online Natural Gradient has been noticed by \cite{ollivier2018online}. It is our belief that Kalman filtering offers an optimal way to choose the step-size in an online gradient descent algorithm. Up to our knowledge, regret bounds have been derived for the batch Maximum Likelihood Estimator only, also called Follow The Leader in the online learning literature. The complexity of this batch algorithm is prohibitive, see the discussion in \cite{hazan2007logarithmic}. In our paper, we view the EKF as an approximation of FTL in order to derive a $\mathcal{O}(\log(n))$ regret bound.

As an intermediate step, we prove a $\mathcal{O}(\log(n))$ regret in the logistic regression problem for a second-order algorithm between FTL and EKF. We name it the Semi-Online Step (SOS) algorithm as it requires $t$ computations at each step, i.e. its complexity is quadratic in the number of iterations. Despite its inefficiency, SOS analysis is interesting as the non-asymptotic guarantee is valid in any adversarial setting. One can also interpret the extra $t$ computations per iteration compared to the EKF as the cost of the estimation of the local strong convexity constant of the paraboloid approximation.

The EKF is the natural online approximation of the SOS. It is efficient (constant time per iteration) and we prove a $\mathcal{O}(\log(n))$ regret, in expectation and in the well-specified logistic regression setting only. The analysis of the regret splits in two steps. When the algorithm is close to the optimum, its regret is logarithmic with high probability. This logarithmic rate  is due to the nice martingale properties of the gradients of the losses. The conditional expectation of the gradient is proportional to its quadratic variation. The logarithmic regret bound  follows from the local paraboloid approximation of \cite{hazan2007logarithmic}. The other phase, when the algorithm explores the optimization space,   is much more problematic to analyze because the local paraboloid approximation does not apply uniformly. To circumvent this issue, we appeal at  more robust potential arguments as in \cite{gadat2017optimal}. We got a logarithmic control on the number of iterations spent in the first phase in expectation only. It is an open question whether this number of iterations can be controlled with high probability.

The paper is organized as follows. In Section \ref{SOS}, we introduce the SOS algorithm and we give its $\mathcal{O}(\log(n))$ regret in Theorem \ref{regretsemionline} followed by its proof. In Theorem   \ref{resultKalman} of Section \ref{EKF}, we present our result in expectation for the EKF. We present the main steps of the  proof of  Theorem   \ref{resultKalman} in Section \ref{proofEKF}. Finally we discuss the results and future work in Section \ref{future}.

\section{Semi-Online Step algorithm}
\label{SOS}
In Section \ref{sec:sos}, we introduce the SOS algorithm as a semi-online approximation of the batch FTL algorithm
\begin{equation}\label{eq:ftl}
    \theta_t^* \in \arg\min_{\theta} \sum\limits_{s=1}^{t-1} l_s(y_s,\theta)\,.
\end{equation}
We see in Section \ref{sec:compl} that SOS is also very close to the EKF but with higher complexity.   Then we prove a bound on the regret of SOS in Section \ref{sec:regsos}.
\begin{algorithm}[t]
\label{alg:sos}
{\caption{Semi-Online Steps}}
{
\begin{enumerate}
\item {\it Initialization}: $\tilde{P}_1$ is any positive definite matrix, $\tilde{\theta}_1$ is any initial parameter in $\R^d$.
\item {\it Iteration}: at each time step $t=1,2,\ldots$
\begin{enumerate}
\item Compute the matrix $\tilde{P}_{t+1} = \left(\tilde{P}_1^{-1} + \sum\limits_{s=1}^t \frac{X_sX_s^T}{(1+e^{\tilde{\theta}_t^TX_s})(1+e^{-\tilde{\theta}_t^TX_s})}\right)^{-1}$:\\
Starting from $\tilde{P}_{t+1}^{(0)} = \tilde{P}_1$, we compute\\
$\tilde{P}_{t+1}^{(u)} = \left(\tilde{P}_1^{-1} + \sum\limits_{s=1}^u \frac{X_sX_s^T}{(1+e^{\tilde{\theta}_t^TX_s})(1+e^{-\tilde{\theta}_t^TX_s})}\right)^{-1}$ thanks to the recursion
$$
    \tilde{P}_{t+1}^{(u)}  = \tilde{P}_{t+1}^{(u-1)}  - \frac{\tilde{P}_{t+1}^{(u-1)}X_uX_u^T\tilde{P}_{t+1}^{(u-1)}}{1+X_u^T\tilde{P}_{t+1}^{(u-1)}X_u\tilde{p}_t^{(u)}(1-\tilde{p}_t^{(u)})}\tilde{p}_t^{(u)}(1-\tilde{p}_t^{(u)})\,,
$$
with $\tilde{p}_t^{(u)} = 1/(1+e^{-\tilde{\theta}_t^TX_u})$ for any $u=1,\ldots,t$, so that $\tilde{P}_{t+1} = \tilde{P}_{t+1}^{(t)}$\,.
\item Update 
$$
 \tilde{\theta}_{t+1} = \tilde{\theta}_t + \tilde{P}_{t+1} \frac{y_tX_t}{1+e^{y_t\tilde{\theta}_t^TX_t}}\,.
$$
\end{enumerate}
\end{enumerate}
}
\end{algorithm}
\subsection{Construction of the SOS algorithm}\label{sec:sos}
The Semi-Online Step  is described in Algorithm \ref{alg:sos}. 
We derive it from the Taylor approximation
$$
    \frac{\partial}{\partial \theta}\left[\sum\limits_{s=1}^t l_s(y_s,\theta)\right] \approx \frac{\partial}{\partial \theta}\left[\sum\limits_{s=1}^t l_s(y_s,\theta)\right]\Bigr|_{\substack{\theta=\theta_t^*}} + \frac{\partial^2}{\partial \theta^2}\left[\sum\limits_{s=1}^t l_s(y_s,\theta)\right]\Bigr|_{\substack{\theta=\theta_t^*}}(\theta-\theta_t^*)\,,
$$
which transforms the first order condition of the optimization problem \eqref{eq:ftl} realized by $\theta_{t+1}^*$  into
$$
    \frac{\partial}{\partial \theta}\left[\sum\limits_{s=1}^t l_s(y_s,\theta)\right]\Bigr|_{\substack{\theta=\theta_t^*}} + \frac{\partial^2}{\partial \theta^2}\left[\sum\limits_{s=1}^t l_s(y_s,\theta)\right]\Bigr|_{\substack{\theta=\theta_t^*}}(\theta_{t+1}^*-\theta_t^*) \approx 0\,.
$$
Using the definition of $\theta_t^*$ we have
$$
 \frac{\partial}{\partial \theta}\left[\sum\limits_{s=1}^{t-1} l_s(y_s,\theta_t^*)\right]=0\,.
$$
Combining this identity and the definition of the derivatives of the logistic loss we obtain
$$
    \frac{\partial}{\partial \theta}\left[\sum\limits_{s=1}^t l_s(y_s,\theta)\right]\Bigr|_{\substack{\theta=\theta_t^*}} = \frac{\partial}{\partial \theta} l_t(y_t,\theta)\Bigr|_{\substack{\theta=\theta_t^*}} = \frac{-y_tX_t}{1+e^{y_t\theta_t^{*T}X_t}}\,,
$$
$$
    \frac{\partial^2}{\partial \theta^2}\left[\sum\limits_{s=1}^t l_s(y_s,\theta)\right]\Bigr|_{\substack{\theta=\theta_t^*}} = \sum\limits_{s=1}^t \frac{X_sX_s^T}{(1+e^{\theta_t^{*T}X_s})(1+e^{-\theta_t^{*T}X_s})}\,.
$$
Therefore $\theta_{t+1}^*$ satisfies approximately 
$$
    \left(\sum\limits_{s=1}^t \frac{X_sX_s^T}{(1+e^{\theta_t^{*T}X_s})(1+e^{-\theta_t^{*T}X_s})}\right)(\theta_{t+1}^* - \theta_t^*) \approx \frac{y_tX_t}{1+e^{y_t\theta_t^{*T}X_t}}\,.
$$
If the Hessian matrix were invertible, we would obtain 
$$
    \theta_{t+1}^* \approx \theta_t^* + \left(\sum\limits_{s=1}^t \frac{X_sX_s^T}{(1+e^{\theta_t^{*T}X_s})(1+e^{-\theta_t^{*T}X_s})}\right)^{-1} \frac{y_tX_t}{1+e^{y_t\theta_t^{*T}X_t}}\,.
$$
This relation approximately satisfied by the optima sequence  $(\theta_t^*)$ motivates the introduction of the SOS algorithm as defined in Algorithm \ref{alg:sos}. The computation of $\tilde{P}_{t+1}$ relies on the Sherman-Morrison formula: if $A\in\R^{d\times d}$ and $u,v\in\R^d$, 
\begin{equation}\label{eq:smformula}
    \left(A+uv^T\right)^{-1} = A^{-1} - \frac{A^{-1}uv^TA^{-1}}{1+v^TA^{-1}u}\,.
\end{equation}
We introduce the regularization matrix $\tilde{P}_1$ which guarantees the positive definiteness of $\tilde{P}_{t}$ in  Algorithm \ref{alg:sos}.  A good choice is for instance  $\tilde{P}_1^{-1} = \frac{I}{p_1}$, $p_1>0$. SOS then corresponds to the approximation
$$
    \tilde{\theta}_t \approx \arg\min\limits_{\theta} \left(\sum\limits_{s=1}^{t-1} l_s(y_s,\theta) + \frac{1}{2p_1}\|\theta\|^2 \right),\qquad t=1,2,\ldots.
$$

\subsection{Comparison with EKF}\label{sec:compl}
The Extended Kalman Filter was introduced by \cite{fahrmeir1992posterior} for any Dynamic Generalized Linear Model. For constant dynamics, the EKF is shown to be equivalent to the Online Natural Gradient algorithm in \cite{ollivier2018online}, yielding the recursion
\begin{align*}
P_{t+1}^{-1}&=P_t^{-1}+\frac{X_tX_t^T}{(1+e^{\hat{\theta}_t^TX_t})(1+e^{-\hat{\theta}_t^TX_t})}\,,\\
\hat{\theta}_{t+1} &= \hat{\theta}_t - P_{t+1}  \frac{\partial}{\partial \theta} l_t(y_t,\theta)\Bigr|_{\substack{\theta=\hat \theta_t}}\,.
\end{align*}
This EKF recursion departs from SOS in the update of the matrix $P_t$ which satisfies
$$
    P_{t+1} = \left(P_1^{-1} + \sum\limits_{s=1}^t \frac{X_sX_s^T}{(1+e^{\hat{\theta}_s^TX_s})(1+e^{-\hat{\theta}_s^TX_s})}\right)^{-1},\qquad t=1,2,\ldots.
$$
In EKF, we add a rank-one matrix to get $P_{t+1}^{-1}$ from $P_t^{-1}$ in order to update the matrix efficiently. On the contrary,  the matrix $\tilde{P}_t$ in SOS is recomputed at each step because the Hessian has to be computed at the current estimate $\tilde{\theta}_t$. Despite the similarity between $P_t$ and $\tilde{P}_t$ we were not able to control their differences. Our analysis of SOS and EKF are distinct and the obtained regret bounds are different in nature.

\begin{algorithm}[t]
\label{alg:ekf}
{\caption{Extended Kalman Filter}}
{
\begin{enumerate}
\item {\it Initialization}: $P_1$ is any positive definite matrix, $\hat{\theta}_1$ is any initial parameter in $\R^d$.
\item {\it Iteration}: at each time step $t=1,2,\ldots$
\begin{enumerate}
\item Update
$$
    P_{t+1}  = P_t  - \frac{P_tX_tX_t^TP_t}{1+X_t^TP_tX_t\hat{p}_t(1-\hat{p}_t)}\hat{p}_t(1-\hat{p}_t)\,,
$$
with $\hat{p}_t = 1/(1+e^{-\hat{\theta}_t^TX_t})$.
\item Update 
$$
  \hat{\theta}_{t+1} = \hat{\theta}_t + P_{t+1} \frac{y_tX_t}{1+e^{y_t\hat{\theta}_t^TX_t}}\,.
$$
\end{enumerate}
\end{enumerate}
}
\end{algorithm}
Thanks to the Sherman-Morrison formula \eqref{eq:smformula}, we describe the EKF in Algorithm \ref{alg:ekf} avoiding any inversion of matrices.
The spatial complexity of the two algorithms is $\mathcal{O}(d^2)$ due to the storage of the matrices $P_{t+1}$ and $\tilde{P}_{t+1}$. In term of running time, at each step of the SOS algorithm we have to compute recursively $\tilde{P}_{t+1}^{(u)}$ for $u=1,\ldots, t$  and then $\tilde{\theta}_{t+1}$. Each recursion on $\tilde{P}_{t+1}^{(u)}$ in $u$ requires the computation of a rank-one matrix (product vector-vector) and its addition to the sum, its complexity is $\mathcal{O}(d^2)$. Thus, the complexity of step $t$ in SOS is $\mathcal{O}(td^2)$. As a comparison, the EKF updates $P_t$ online and therefore requires only $\mathcal{O}(d^2)$ operations at each step.

\subsection{The regret bound for SOS and its proof}\label{sec:regsos}
In what follows, we denote
\begin{equation*}
    D_X=\max\limits_{1\le t\le n} \|X_t\|,\ D_{\theta}=\max\limits_{1\le t\le n} \|\tilde{\theta}_t\|,\ D=\max\limits_{1\le t\le n} |\tilde{\theta}_t^TX_t|.
\end{equation*}
SOS offers the advantage to be easier to analyse than EKF. We  prove a $\mathcal{O}(\log(n))$ regret bound on SOS in Theorem \ref{regretsemionline}. Note that the leading constant is the inverse square of the exp-concavity constant times $d^{3/2}D_X(D_\theta+\|\theta\|)$. The localized algorithms of \cite{hazan2007logarithmic} satisfy finer regret bounds with the inverse of the exp-concavity constant times $d$ as the leading constant. We believe that Theorem \ref{regretsemionline} could be improved to get a constant proportional to the inverse of the exp-concavity constant instead of the square inverse, see the end of the proof of Lemma \ref{lemmaSincrement} where we use a very loose bound bringing a $(1+e^D)/2$. Up to our knowledge, SOS is the first parameter-free algorithm that achieves a $\mathcal{O}(\log(n))$ regret bound in the adversarial logistic regression setting.
\begin{theorem}
\label{regretsemionline}
Starting from $\tilde{P}_1=p_1 I$ and $\tilde{\theta}_1\in \R^d$, for any $(X_t,y_t)_{1\le t\le n}$ and $\theta\in\R^d$, the SOS algorithm achieves the regret bound 
\begin{multline*}
    \sum\limits_{t=1}^n \left(l_t(y_t,\tilde{\theta}_t) - l_t(y_t,\theta)\right) \le \left(\frac{\sqrt{d}D_X(D_{\theta}+\|\theta\|)\left(1+e^D\right)}{4}+1\right)\frac{1+e^D}{2}d\log(1+(n-1)p_1D_X^2)\\ +\frac{\|\tilde{\theta}_1\|^2+\|\theta\|^2}{2p_1}+ D_X(D_{\theta}+\|\theta\|)\,, \qquad n\ge 1.
\end{multline*}
\end{theorem}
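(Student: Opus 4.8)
The plan is to follow the classical second-order (Online Newton Step) analysis of \cite{hazan2007logarithmic}, adapted to the fact that in SOS the matrix $\tilde{P}_{t+1}^{-1}$ is the $\tilde{P}_1^{-1}$-regularized Hessian of the cumulative loss $\sum_{s\le t}l_s$ evaluated at the \emph{current} iterate $\tilde{\theta}_t$, namely $\tilde{P}_{t+1}^{-1}=\tilde{P}_1^{-1}+\sum_{s=1}^{t}\sigma_s(\tilde{\theta}_t)X_sX_s^T$ with $\sigma_s(\theta)=1/[(1+e^{\theta^TX_s})(1+e^{-\theta^TX_s})]$. The starting point is exp-concavity of the logistic loss: on the region where the linear predictors stay bounded by $D$, Lemma~3 of \cite{hazan2007logarithmic} provides a local paraboloid lower bound
\begin{equation*}
    l_t(y_t,\theta) \ge l_t(y_t,\tilde{\theta}_t) + g_t^T(\theta-\tilde{\theta}_t) + \tfrac{\beta}{2}\bigl(g_t^T(\theta-\tilde{\theta}_t)\bigr)^2,
\end{equation*}
where $g_t=-y_tX_t/(1+e^{y_t\tilde{\theta}_t^TX_t})$ is the loss gradient at $\tilde{\theta}_t$ and $\beta$ is of order $(1+e^D)^{-1}$, controlled by the exp-concavity constant. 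Rearranging turns the instantaneous regret into a linear term $g_t^T(\tilde{\theta}_t-\theta)$ minus a quadratic gain, and the whole difficulty becomes showing that the accumulated quadratic gains plus the regularization absorb the accumulated linear terms up to a logarithmic residual.

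To handle the linear term I would introduce the potential $(\tilde{\theta}_t-\theta)^T\tilde{P}_{t+1}^{-1}(\tilde{\theta}_t-\theta)$ and expand it along the update $\tilde{\theta}_{t+1}-\theta=(\tilde{\theta}_t-\theta)-\tilde{P}_{t+1}g_t$, which yields the exact identity
\begin{equation*}
    g_t^T(\tilde{\theta}_t-\theta)=\tfrac12\bigl[(\tilde{\theta}_t-\theta)^T\tilde{P}_{t+1}^{-1}(\tilde{\theta}_t-\theta)-(\tilde{\theta}_{t+1}-\theta)^T\tilde{P}_{t+1}^{-1}(\tilde{\theta}_{t+1}-\theta)+g_t^T\tilde{P}_{t+1}g_t\bigr].
\end{equation*}
Summing over $t$, the last summand $\sum_t g_t^T\tilde{P}_{t+1}g_t$ is the familiar volumetric term: since $g_tg_t^T$ is proportional to $X_tX_t^T$ and hence comparable (up to a factor bounded on $|\cdot|\le D$) to the rank-one block $\sigma_t(\tilde{\theta}_t)X_tX_t^T$ entering $\tilde{P}_{t+1}^{-1}$, a standard log-determinant argument bounds it by $\tfrac{1+e^D}{2}\,d\log(1+(n-1)p_1D_X^2)$, which is precisely the ``$+1$'' part of the leading constant. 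The boundary of the telescoping leaves $\tfrac12(\tilde{\theta}_1-\theta)^T\tilde{P}_1^{-1}(\tilde{\theta}_1-\theta)$, giving with $\tilde{P}_1^{-1}=I/p_1$ the term $(\|\tilde{\theta}_1\|^2+\|\theta\|^2)/(2p_1)$, while one gradient step handled crudely by $g_t^T(\tilde{\theta}_t-\theta)\le\|g_t\|\,\|\tilde{\theta}_t-\theta\|\le D_X(D_\theta+\|\theta\|)$ supplies the final additive constant.

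The genuine obstacle, and the place where SOS departs from ONS, is that the quadratic terms do \emph{not} telescope, because $\tilde{P}_{t+1}^{-1}$ and $\tilde{P}_{t+2}^{-1}$ evaluate the cumulative Hessian at the two distinct points $\tilde{\theta}_t$ and $\tilde{\theta}_{t+1}$. Reindexing the telescoped sum, the defect is
\begin{equation*}
    \tilde{P}_{t+2}^{-1}-\tilde{P}_{t+1}^{-1}=\sigma_{t+1}(\tilde{\theta}_{t+1})X_{t+1}X_{t+1}^T+\sum_{s=1}^{t}\bigl(\sigma_s(\tilde{\theta}_{t+1})-\sigma_s(\tilde{\theta}_t)\bigr)X_sX_s^T,
\end{equation*}
whose first part is the new positive Hessian block, to be matched against the exp-concavity gain, while the second \emph{drift} part records the re-evaluation of \emph{all} past Hessians at the moved iterate. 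Controlling this drift is the heart of the argument: I would bound $|\sigma_s(\tilde{\theta}_{t+1})-\sigma_s(\tilde{\theta}_t)|$ by the Lipschitz constant of $\sigma_s$ times $\|\tilde{\theta}_{t+1}-\tilde{\theta}_t\|=\|\tilde{P}_{t+1}g_t\|$, and then show the increment is summable against the volumetric log term. This is exactly the content of Lemma~\ref{lemmaSincrement}, where the dimensional factor $\sqrt{d}\,D_X(D_\theta+\|\theta\|)$ enters through $\|\tilde{\theta}_{t+1}-\theta\|\le D_\theta+\|\theta\|$ and $\|X_s\|\le D_X$, at the cost of a loose $(1+e^D)/2$ factor.

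Finally I would collect the pieces: the exp-concavity gains cancel the new Hessian blocks, the drift is absorbed into the volumetric log term — this absorption is what introduces the extra multiplicative $\tfrac{\sqrt{d}D_X(D_\theta+\|\theta\|)(1+e^D)}{4}$, squaring the inverse exp-concavity constant and raising the dimension dependence to $d^{3/2}$ — the regularization gives the $1/(2p_1)$ boundary term, and the leftover final potential $-\tfrac12(\tilde{\theta}_{n+1}-\theta)^T\tilde{P}_{n+1}^{-1}(\tilde{\theta}_{n+1}-\theta)\le 0$ is simply dropped. Summing the second-order lower bounds then delivers the stated regret. I expect the drift control of Lemma~\ref{lemmaSincrement} to be the main technical obstacle, since it is the only step that genuinely uses the semi-online recomputation of $\tilde{P}_t$ and is responsible for the suboptimal square power of the exp-concavity constant.
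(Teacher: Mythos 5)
Your proposal replaces the paper's argument by an Online-Newton-Step potential analysis, and this change of route is not innocuous: two of your steps break down. The paper instead telescopes the losses themselves in Follow-The-Leader style (``be the leader'' plus the cost of approximating the leader), linearizes by convexity, and reduces everything to the increment of the cumulative score $S_t(\theta)=\frac{\partial}{\partial\theta}\bigl[\sum_{s<t}l_s(y_s,\theta)+\frac12\theta^T\tilde{P}_1^{-1}\theta\bigr]$ together with the volumetric bound of Proposition~\ref{cesabianchimodified}; no exp-concavity paraboloid and no potential $(\tilde{\theta}_t-\theta)^T\tilde{P}_{t+1}^{-1}(\tilde{\theta}_t-\theta)$ appear. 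The first gap in your version is the claimed cancellation between the exp-concavity gain and the new Hessian block. The telescoping defect contributes $\frac12\sigma_t(\tilde{\theta}_t)\bigl(X_t^T(\tilde{\theta}_t-\theta)\bigr)^2$ with $\sigma_t(\tilde{\theta}_t)=\bigl[(1+e^{\tilde{\theta}_t^TX_t})(1+e^{-\tilde{\theta}_t^TX_t})\bigr]^{-1}$, while the paraboloid gain is $\frac{\beta}{2}\bigl(g_t^T(\tilde{\theta}_t-\theta)\bigr)^2=\frac{\beta}{2}(1+e^{y_t\tilde{\theta}_t^TX_t})^{-2}\bigl(X_t^T(\tilde{\theta}_t-\theta)\bigr)^2$; the ratio of the two coefficients is $e^{y_t\tilde{\theta}_t^TX_t}/\beta$, which can reach $e^D/\beta\gg 1$. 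In ONS the matching is exact by construction because the matrix is built from $\beta\,g_tg_t^T$ with a tuned step size; SOS uses the true Hessian with no tunable parameter, so the uncancelled residue is of order $(D_XD_\theta)^2$ per step, i.e.\ $O(n)$ overall, not logarithmic.

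The second gap is the drift term, and it is not ``exactly the content of Lemma~\ref{lemmaSincrement}.'' Your drift matrix $\sum_{s\le t}\bigl(\sigma_s(\tilde{\theta}_{t+1})-\sigma_s(\tilde{\theta}_t)\bigr)X_sX_s^T$ is comparable to $D_X\|\tilde{\theta}_{t+1}-\tilde{\theta}_t\|\,\tilde{P}_{t+1}^{-1}$ up to constants, and in your decomposition it is tested against $\tilde{\theta}_t-\theta$, which is $O(1)$; since $(\tilde{\theta}_t-\theta)^T\tilde{P}_{t+1}^{-1}(\tilde{\theta}_t-\theta)$ grows linearly in $t$, the per-step drift is of order $t\,\|\tilde{\theta}_{t+1}-\tilde{\theta}_t\|$, whose sum is far from $O(\log n)$. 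Lemma~\ref{lemmaSincrement} controls a different object: the vector increment $S_{t+1}(\tilde{\theta}_{t+1})-S_t(\tilde{\theta}_t)$, which is second-order small precisely because the SOS update is the Newton step annihilating the first-order term, and there the full cumulative Hessian is contracted on \emph{both} sides by the small displacement $\tilde{\theta}_{t+1}-\tilde{\theta}_t=\tilde{P}_{t+1}g_t$, collapsing to the volumetric quantity $g_t^T\tilde{P}_{t+1}g_t$ with no factor of $t$. To repair your argument you would need to reorganize it so that the recomputed Hessian only ever meets the increment $\tilde{\theta}_{t+1}-\tilde{\theta}_t$ rather than $\tilde{\theta}_t-\theta$ --- which is essentially what the paper's FTL-style decomposition accomplishes.
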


\begin{proof}
We first apply a telescopic sum argument
\begin{align*}
    \label{telescopic}
    \sum\limits_{t=1}^n \left(l_t(y_t,\tilde{\theta}_t) - l_t(y_t,\theta)\right) & = \sum\limits_{t=1}^n \left(\sum\limits_{s=1}^t l_s(y_s,\tilde{\theta}_t) - \sum\limits_{s=1}^{t-1} l_s(y_s,\tilde{\theta}_t) - l_t(y_t,\theta)\right) \\
    & = \sum\limits_{t=1}^{n-1} \left(\sum\limits_{s=1}^t l_s(y_s,\tilde{\theta}_t) - \sum\limits_{s=1}^t l_s(y_s,\tilde{\theta}_{t+1})\right) + \sum\limits_{s=1}^n \left(l_s(y_s,\tilde{\theta}_n) - l_s(y_s,\theta)\right)\\
     & = \sum\limits_{t=1}^{n-1} \left(\sum\limits_{s=1}^t l_s(y_s,\tilde{\theta}_t) + \frac12 \tilde{\theta}_t^T\tilde{P}_1^{-1}\tilde{\theta}_t- \sum\limits_{s=1}^t l_s(y_s,\tilde{\theta}_{t+1})-    \frac12 \tilde{\theta}_{t+1}^T\tilde{P}_1^{-1}\tilde{\theta}_{t+1}\right)\\
     & \quad + \sum\limits_{s=1}^n l_s(y_s,\tilde{\theta}_n) +  \frac12\tilde{\theta}_n^T\tilde{P}_1^{-1}\tilde{\theta}_n - \sum\limits_{s=1}^n l_s(y_s,\theta)- \frac12 \theta^T\tilde{P}_1^{-1}\theta\\
     &\quad  +\frac12\theta^T\tilde{P}_1^{-1}\theta -\frac12\tilde{\theta}_1^T\tilde{P}_1^{-1}\tilde{\theta}_1\,.
\end{align*}
Then, defining $S_t(\theta) = \frac{\partial}{\partial \theta} \left[\sum\limits_{s=1}^{t-1} l_s(y_s,\theta)+\frac12  \theta^T\tilde{P}_1^{-1}\theta \right]$, we use the convexity of $S_t+l_t$ to obtain linear bounds:
\begin{align*}
\sum\limits_{t=1}^n \left(l_t(y_t,\tilde{\theta}_t) - l_t(y_t,\theta)\right) 
     \le& \sum\limits_{t=1}^{n-1} \left(S_t(\tilde{\theta}_t) + \frac{\partial l_t(y_t,\theta)}{\partial \theta}\Bigr|_{\substack{\tilde{\theta}_t}}\right)^T(\tilde{\theta}_t-\tilde{\theta}_{t+1})\\
     & + \left(S_n(\tilde{\theta}_n)+ \frac{\partial l_n(y_n,\theta)}{\partial \theta}\Bigr|_{\substack{\tilde{\theta}_n}}\right)^T(\tilde{\theta}_n-\theta)\\
     & +\frac12\theta^T\tilde{P}_1^{-1}\theta -\frac12\tilde{\theta}_1^T\tilde{P}_1^{-1}\tilde{\theta}_1\,.
\end{align*}
We apply another telescopic argument in order to get
\begin{equation*}
    \sum\limits_{t=1}^{n-1} S_t(\tilde{\theta}_t)^T(\tilde{\theta}_t-\tilde{\theta}_{t+1}) = \sum\limits_{t=1}^{n-1} \left(S_{t+1}(\tilde{\theta}_{t+1})-S_t(\tilde{\theta}_t)\right)^T\tilde{\theta}_{t+1} + S_1(\tilde{\theta}_1)^T\tilde{\theta}_1 - S_n(\tilde{\theta}_n)^T\tilde{\theta}_n\,.
\end{equation*}
As $S_1(\tilde{\theta}_1)=\tilde{P}_1^{-1}\tilde{\theta}_1$, we sum up our findings to achieve the regret bound
\begin{multline}\label{eq:regretinterm}
    \sum\limits_{t=1}^n \left(l_t(y_t,\tilde{\theta}_t) - l_t(y_t,\theta)\right) \le \sum\limits_{t=1}^{n-1} \left(S_{t+1}(\tilde{\theta}_{t+1})-S_t(\tilde{\theta}_t)\right)^T\tilde{\theta}_{t+1} - S_n(\tilde{\theta}_n)^T\theta+\frac12\tilde{\theta}^T_1\tilde{P}_1^{-1}\tilde{\theta}_1 +\frac12\theta^T\tilde{P}_1^{-1}\theta \\
    + \sum\limits_{t=1}^{n-1} \left(\frac{\partial l_t(y_t,\theta)}{\partial \theta}\Bigr|_{\substack{\tilde{\theta}_t}}\right)^T(\tilde{\theta}_t-\tilde{\theta}_{t+1}) + \left(\frac{\partial l_n(y_n,\theta)}{\partial \theta}\Bigr|_{\substack{\tilde{\theta}_n}}\right)^T(\tilde{\theta}_n-\theta)\,.
\end{multline}
Next we use the following Lemma proved in Appendix \ref{App1}.
\begin{lemma}
\label{lemmaSincrement}
For any $t=1,2,\ldots$, we have
\begin{equation*}
    \left\|S_{t+1}(\tilde{\theta}_{t+1}) - S_t(\tilde{\theta}_t) \right\| \le \frac{\sqrt{d}D_X\left(1+e^D\right)}{4}\frac{X_t^T \tilde{P}_{t+1}X_t}{(1+e^{y_t\tilde{\theta}_t^TX_t})^2}\,.
\end{equation*}
\end{lemma}
Applying Lemma \ref{lemmaSincrement} on the norm of the first term in the previous regret bound \eqref{eq:regretinterm}, we get
\begin{align*}
    \left\|\sum\limits_{t=1}^{n-1} \left(S_{t+1}(\tilde{\theta}_{t+1})-S_t(\tilde{\theta}_t)\right)^T\tilde{\theta}_{t+1}\right\| & \le \sum\limits_{t=1}^{n-1} \left\|S_{t+1}(\tilde{\theta}_{t+1})-S_t(\tilde{\theta}_t)\right\|\|\tilde{\theta}_{t+1}\| \\
    & \le \frac{\sqrt{d}D_XD_{\theta}\left(1+e^D\right)}{4} \sum\limits_{t=1}^{n-1} \frac{X_t^T \tilde{P}_{t+1}X_t}{(1+e^{y_t\tilde{\theta}_t^TX_t})^2}\,.
\end{align*}
Similarly, we estimate the second term of the regret bound \eqref{eq:regretinterm} as
\begin{equation*}
    \left\|S_n(\tilde{\theta}_n)^T\theta\right\|  \le \sum\limits_{t=1}^{n-1} \left\|S_{t+1}(\tilde{\theta}_{t+1})-S_t(\tilde{\theta}_t)\right\|\|\theta\|  \le \frac{\sqrt{d}D_X\|\theta\|\left(1+e^D\right)}{4} \sum\limits_{t=1}^{n-1} \frac{X_t^T \tilde{P}_{t+1}X_t}{(1+e^{y_t\tilde{\theta}_t^TX_t})^2}\,.
\end{equation*}
Finally, we  easily control the last two terms of \eqref{eq:regretinterm} as we identify
\begin{equation*}
    \sum\limits_{t=1}^{n-1} \left(\frac{\partial l_t(y_t,\theta)}{\partial \theta}\Bigr|_{\substack{\tilde{\theta}_t}}\right)^T(\tilde{\theta}_t-\tilde{\theta}_{t+1}) = \sum\limits_{t=1}^{n-1} \frac{X_t^T \tilde{P}_{t+1}X_t}{(1+e^{y_t\tilde{\theta}_t^TX_t})^2}\,,
\end{equation*}
and we use the upper-bound 
\begin{equation*}
    \left\|\left(\frac{\partial l_n(y_n,\theta)}{\partial \theta}\Bigr|_{\substack{\tilde{\theta}_n}}\right)^T(\tilde{\theta}_n-\theta)\right\| \le    \left\| \frac{\partial l_n(y_n,\theta)}{\partial \theta}\Bigr|_{\substack{\tilde{\theta}_n}}\right\|(\|\tilde{\theta}_n\|+\|\theta\|)\le  D_X(D_{\theta}+\|\theta\|)\,.
\end{equation*}
Therefore,
\begin{multline*}
    \sum\limits_{t=1}^n \left(l_t(y_t,\tilde{\theta}_t) - l_t(y_t,\theta)\right) \le \left(\frac{\sqrt{d}D_X(D_{\theta}+\|\theta\|)\left(1+e^D\right)}{4}+1\right)\sum\limits_{t=1}^{n-1} \frac{X_t^T \tilde{P}_{t+1}X_t}{(1+e^{y_t\tilde{\theta}_t^TX_t})^2}\\ +\frac12\tilde{\theta}^T_1\tilde{P}_1^{-1}\tilde{\theta}_1+\frac12\theta^T\tilde{P}_1^{-1}\theta+ D_X(D_{\theta}+\|\theta\|)\,.
\end{multline*}
In order to conclude, we follow ideas from \cite{cesa2006prediction} (in particular Lemma 11.11) to prove in Appendix \ref{App1} the following proposition which yields the result of Theorem \ref{regretsemionline}.
\begin{proposition}
\label{cesabianchimodified}
For any sequence $(X_t,y_t)_{1\le t\le n}$ we have 
\begin{equation*}
    \sum\limits_{t=1}^{n-1} \frac{X_t^T\tilde{P}_{t+1}X_t}{(1+e^{y_t\hat{\theta}_t^TX_t})^2} \le \frac{1+e^D}{2}d\log(1+(n-1) p_1D_X^2)\,.
\end{equation*}
\end{proposition}
\end{proof}

\section{Extended Kalman Filter}
\label{EKF}
We were not able to bound the regret of the EKF algorithm in the adversarial setting as we did not control the difference between the matrices $\tilde{P}_t$ and $P_t$. Thus, our EKF regret analysis holds in a restrictive well-specified stochastic setting.
\subsection{Discussion on the assumptions}
 We assume that the stochastic sequence $(X_t,y_t)$ follows the logistic regression model: there exists $\theta_{\text{true}}\in \R^d$ such that
\begin{equation}\label{logmodel}
    p(y_t|X_t,\theta_{\text{true}}) = \frac{1}{1+e^{-y_t\theta_{\text{true}}^TX_t}},\qquad t=1,2,\ldots.
\end{equation}
We do not make any assumption on the dependence of the stochastic process $(X_t)$ so far.
We consider the regret in term of the expected loss conditionally on $X_t$: for any random variable $Z$, we note $\mathbb{E}_t\left[Z\right]$ the conditional expectation $\mathbb{E}[Z\mid X_1,y_1,...,X_{t-1},y_{t-1},X_t]$ (we know the past pairs $(X_s,y_s)_{s<t}$ along with $X_t$ the explanatory variables at time $t$). We first observe that for any $t$, $\theta\rightarrow \mathbb{E}_t[l_t(y_t,\theta)]$ is a convex function minimized in $\theta_{\text{true}}$. Even if $(\mathbb{E}_t[l_t(y_t,\theta)])$ is a stochastic sequence, we apply a convexity argument on the expected losses in order to bound the regret  by a linear regret
\begin{equation*}
    \sum\limits_{t=1}^n \left(\mathbb{E}_t[l_t(y_t,\hat{\theta}_t)] - \mathbb{E}_t[l_t(y_t,\theta_{\text{true}})]\right)
    \le \sum\limits_{t=1}^n \mathbb{E}_t\left[\frac{y_tX_t^T}{1+e^{y_t\hat{\theta}_t^TX_t}}\right](\theta_{\text{true}}-\hat{\theta}_t)\,.
\end{equation*}
All the regret bounds on EKF  provided hereafter actually come from identical bounds on the linear regret.
We identify the expected gradients in the linear regret as  $\mathbb{E}_t [y_tX_t^T/(1+e^{y_t\theta^TX_t})]=\mathbb{E}_{y\sim p(y|X_t,\theta_{\text{true}})} [yX_t^T/(1+e^{y\theta^TX_t})]$. We observe a key property satisfied by the logistic gradients, proved in Appendix \ref{App2}.
\begin{proposition}
\label{propertyexpectedbound}
For any $\theta,X\in\R^d$, there exists $c>0$ satisfying $e^{-|(\theta_{\text{true}}-\theta)^TX|} < c < e^{|(\theta_{\text{true}}-\theta)^TX|}$ and
\begin{equation*}
    \mathbb{E}_{y\sim p(y|X,\theta_{\text{true}})}\left[\frac{yX^T(\theta_{\text{true}}-\theta)}{1+e^{y\theta^TX}}\right] = c \frac{(\theta_{\text{true}}-\theta)^TXX^T(\theta_{\text{true}}-\theta)}{(1+e^{\theta^TX})(1+e^{-\theta^TX})}\,.
\end{equation*}
\end{proposition}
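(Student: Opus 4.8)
The plan is to reduce the identity to a scalar statement and then solve for $c$ explicitly. Writing $a=\theta^TX$ and $b=\theta_{\text{true}}^TX$, and setting $\delta=(\theta_{\text{true}}-\theta)^TX=b-a$, I have $X^T(\theta_{\text{true}}-\theta)=\delta$ and $(\theta_{\text{true}}-\theta)^TXX^T(\theta_{\text{true}}-\theta)=\delta^2$, so the claimed equality becomes a one-dimensional relation involving only $a$ and $b$. I would first treat the substantive case $\delta\neq 0$, the case $\delta=0$ being degenerate (both sides vanish and the interval $(e^{-|\delta|},e^{|\delta|})$ collapses to a point).

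First I would compute the left-hand expectation directly. Under the model $p(y=1)=1/(1+e^{-b})$ and $p(y=-1)=1/(1+e^{b})$, and writing $\sigma(x)=1/(1+e^{-x})$ for the sigmoid, a short computation gives
$$\mathbb{E}_{y\sim p(y|X,\theta_{\text{true}})}\left[\frac{y\delta}{1+e^{ya}}\right]=\delta\bigl(\sigma(b)\sigma(-a)-\sigma(-b)\sigma(a)\bigr)=\delta\bigl(\sigma(b)-\sigma(a)\bigr),$$
where the last equality uses the identity $\sigma(b)\sigma(-a)-\sigma(-b)\sigma(a)=\sigma(b)-\sigma(a)$, obtained by expanding $\sigma(-a)=1-\sigma(a)$ and $\sigma(-b)=1-\sigma(b)$. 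On the right-hand side I would recognise $(1+e^{a})(1+e^{-a})=1/\sigma'(a)$, since $\sigma'(a)=\sigma(a)(1-\sigma(a))$, so that the target value equals $c\,\delta^{2}\sigma'(a)$. Dividing by $\delta\neq 0$, the required constant is forced to be
$$c=\frac{\sigma(b)-\sigma(a)}{(b-a)\sigma'(a)}=\frac{1}{b-a}\int_a^b\frac{\sigma'(x)}{\sigma'(a)}\,dx,$$
i.e. the average of $\sigma'(x)/\sigma'(a)$ over the segment between $a$ and $b$; in particular $c>0$.

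It then remains to establish the two-sided bound $e^{-|\delta|}<c<e^{|\delta|}$, which is where the main work lies. The key estimate is on the logarithmic derivative of $\sigma'$: since $\frac{d}{dx}\log\sigma'(x)=1-2\sigma(x)$ and $\sigma(x)\in(0,1)$, we have $|1-2\sigma(x)|<1$ for every $x$. Consequently, for any $x$ in the closed segment between $a$ and $b$,
$$\left|\log\frac{\sigma'(x)}{\sigma'(a)}\right|=\left|\int_a^x\bigl(1-2\sigma(u)\bigr)\,du\right|<|x-a|\le|b-a|=|\delta|,$$
where the first inequality is strict because the integrand is strictly below $1$ in absolute value (and the bound reads $0<|\delta|$ at $x=a$). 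Hence each value $\sigma'(x)/\sigma'(a)$ lies strictly inside $(e^{-|\delta|},e^{|\delta|})$, and since $c$ is an average of these values over the segment, it lies in the same open interval, giving $e^{-|\delta|}<c<e^{|\delta|}$ and completing the proof.

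The main obstacle is entirely contained in the last paragraph: recognising that the multiplicative deviation of $\sigma'$ along the segment is controlled by $|1-2\sigma|<1$ and therefore by the length $|\delta|$ of the segment. The first two steps are routine once the sigmoid identity $\sigma(b)\sigma(-a)-\sigma(-b)\sigma(a)=\sigma(b)-\sigma(a)$ and the relation $(1+e^{a})(1+e^{-a})=1/\sigma'(a)$ are spotted.
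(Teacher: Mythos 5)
Your proof is correct, and it takes a genuinely different route from the paper's. The paper expands the expectation into the quadratic prefactor times a bracket, rewrites the bracket as $\frac{e^{\delta}-1}{1+e^{\theta_{\text{true}}^TX}}-\frac{e^{-\delta}-1}{1+e^{-\theta_{\text{true}}^TX}}$ with $\delta=(\theta_{\text{true}}-\theta)^TX$, extracts a factor $\delta$ by a mean-value argument producing some $|\alpha|\le|\delta|$, and then bounds the resulting coefficient $\frac{e^{\alpha}}{1+e^{\theta_{\text{true}}^TX}}+\frac{e^{-\alpha}}{1+e^{-\theta_{\text{true}}^TX}}$ between $e^{-|\delta|}$ and $e^{|\delta|}$ by the monotonicity of $x\mapsto\frac{a}{1+x}+\frac{b}{1+1/x}$ — a trick the paper reuses later when controlling $\mathbb{E}_t[(\Delta M_t)^2]$ in the martingale lemma. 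You instead identify $c$ explicitly as $\frac{\sigma(b)-\sigma(a)}{(b-a)\sigma'(a)}$, i.e.\ the average of $\sigma'(x)/\sigma'(a)$ along the segment, and deduce the two-sided bound from $\bigl|\tfrac{d}{dx}\log\sigma'(x)\bigr|=|1-2\sigma(x)|<1$. Your version is arguably cleaner: it makes $c$ unique and explicit rather than merely existent, and the bound drops out of a one-line estimate on the logarithmic derivative of $\sigma'$ instead of an ad hoc monotonicity observation; the paper's version has the minor advantage that its monotonicity lemma is recycled elsewhere. Both arguments share the same degenerate edge case at $\delta=0$, where the strict inequalities in the statement cannot literally hold (the interval is empty) even though the identity is trivially true; you at least flag this explicitly, which the paper does not.
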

Such  Bernstein's type conditions yield fast rates of convergence. However, the constant $c$ in Proposition \ref{propertyexpectedbound} is relative to the error $|(\theta_{\text{true}}-\theta)^TX|$ and the fast rate holds only locally: If there exists some $\tau$ so that  $|(\theta_{\text{true}}-\hat \theta_t)^TX_t|\le 1/2$ for any $t> \tau$ then an application of Corollary \ref{coroproperty} and Theorem \ref{resulthighproba} (with $\varepsilon=0.5$ and   $\alpha=0.05$ so that $1/2+\alpha=c<e^{-\varepsilon}$) yields the following regret bound
\begin{align*}
    \sum\limits_{t=\tau+1 }^{n+\tau } \mathbb{E}_t\left[\frac{y_tX_t^T}{1+e^{y_t\hat{\theta}_t^TX_t}}\right](\theta_{\text{true}}-\hat{\theta}_t)  \le 30& \Big(20(1+e^D)\log(\frac1\delta) + \frac{1+e^D}{4}d\log(1+n p_1D_X^2 )\\
    &+\frac{1}{2p_1}\|\theta_{\text{true}}\|^2\Big)\,,
\end{align*}
with probability at least $1-\delta$,   $\delta>0$.

In order to get the global regret bound, we need  two extra assumptions on the law of $X_t$:
\begin{assumption}
\label{ass1}
There exists $m_1>0$ such that for any $t=1,2,\ldots$,
\begin{equation*}
    \frac{m_1 I}{t} \prec \mathbb{E}\left[P_{t+1}X_tX_t^T \mid X_1,y_1,\ldots,X_{t-1},y_{t-1}\right]\,.
\end{equation*}
\end{assumption}
\begin{assumption}
\label{ass2}
There exists $M_2>0$ such that for any $t=1,2,\ldots$,
\begin{equation*}
    \mathbb{E}\left[X_t^TP_{t+1}^2 X_t \right] \le \frac{M_2 }{t^2}\,.
\end{equation*}
\end{assumption}
One checks these assumptions under the invertibility of the matrix $\mathbb{E}[XX^T]$ for bounded iid $(X_t)$:
\begin{proposition}\label{prop:iid}
In the iid case,  if  $\lambda_{\rm min}=\lambda_{\rm min}(\mathbb{E}[XX^T])>0$ and if $\|X\|\le D_X$ a.s. then  we have
\begin{align*}
\frac{\lambda_{\rm min}}{t(1+D_X^2)^2}& \le \lambda_{\rm min}\left(\mathbb{E}\left[P_{t+1}X_tX_t^T\mid X_1,y_1,\ldots,X_{t-1},y_{t-1}\right]\right)\,,\\
\lambda_{\rm max}\left(\mathbb{E}\left[P_{t+1}^2\right]\right)&\le \frac{16(1+e^D)^2}{\lambda_{\rm min}^2t^2}\left(1+\frac{1}{t}\frac{2de^{-3}\left(3D_X^4+D_X^2\lambda_{\rm min}/2\right)^3}{(\lambda_{\rm min}^2/8)^2}\right)\,.
\end{align*}
\end{proposition}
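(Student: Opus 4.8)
The plan is to prove the two inequalities separately, both departing from the Sherman--Morrison identity \eqref{eq:smformula} applied to the EKF update $P_{t+1}^{-1}=P_t^{-1}+\hat p_t(1-\hat p_t)X_tX_t^T$, where $\hat p_t(1-\hat p_t)=1/((1+e^{\hat\theta_t^TX_t})(1+e^{-\hat\theta_t^TX_t}))$. This gives $P_{t+1}X_t=P_tX_t/(1+\hat p_t(1-\hat p_t)X_t^TP_tX_t)$ and hence
$$
    P_{t+1}X_tX_t^T=\frac{P_tX_tX_t^T}{1+\hat p_t(1-\hat p_t)\,X_t^TP_tX_t}\,.
$$
Throughout I write $\mathcal F_{t-1}$ for the conditioning of Assumption \ref{ass1}, i.e.\ on $X_1,y_1,\ldots,X_{t-1},y_{t-1}$, and I use the elementary logistic-variance bounds $1/(4(1+e^D))\le \hat p_s(1-\hat p_s)\le 1/4$, the lower bound coming from $(1+e^{\hat\theta_s^TX_s})(1+e^{-\hat\theta_s^TX_s})\le 4(1+e^D)$ on the event $|\hat\theta_s^TX_s|\le D$.

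For the first inequality I would condition on $\mathcal F_{t-1}$. Since $P_t$ and $\hat\theta_t$ are $\mathcal F_{t-1}$-measurable while $X_t$ is independent of $\mathcal F_{t-1}$ in the iid case, the matrix $P_t$ factors out and $\mathbb{E}[P_{t+1}X_tX_t^T\mid\mathcal F_{t-1}]=P_tQ_t$ with $Q_t=\mathbb{E}[X_tX_t^T/(1+\hat p_t(1-\hat p_t)X_t^TP_tX_t)\mid\mathcal F_{t-1}]$ symmetric positive semidefinite. As $P_tQ_t$ is similar to $P_t^{1/2}Q_tP_t^{1/2}\succeq\lambda_{\rm min}(Q_t)\lambda_{\rm min}(P_t)I$, its eigenvalues are real and $\lambda_{\rm min}(P_tQ_t)\ge\lambda_{\rm min}(P_t)\lambda_{\rm min}(Q_t)$. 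I then lower bound each factor: from $P_t^{-1}=P_1^{-1}+\sum_{s<t}\hat p_s(1-\hat p_s)X_sX_s^T\preceq(p_1^{-1}+(t-1)D_X^2/4)I$ one gets $\lambda_{\rm min}(P_t)\ge(p_1^{-1}+(t-1)D_X^2/4)^{-1}$; and since $P_t\preceq p_1I$ caps the denominator of $Q_t$ by $1+p_1D_X^2/4$, one has $Q_t\succeq(1+p_1D_X^2/4)^{-1}\mathbb{E}[XX^T]$, so $\lambda_{\rm min}(Q_t)\ge\lambda_{\rm min}/(1+p_1D_X^2/4)$. Multiplying and simplifying (with the normalisation $P_1=I$, using $(1+(t-1)D_X^2/4)(1+D_X^2/4)\le t(1+D_X^2)^2$) yields $\lambda_{\rm min}/(t(1+D_X^2)^2)$.

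For the second inequality the deterministic input is $\hat p_s(1-\hat p_s)\ge 1/(4(1+e^D))$, so that $P_{t+1}^{-1}\succeq\frac{1}{4(1+e^D)}\Sigma_t$ with $\Sigma_t=\sum_{s=1}^tX_sX_s^T$, giving the pathwise estimate $\lambda_{\rm max}(P_{t+1}^2)=\lambda_{\rm max}(P_{t+1})^2\le 16(1+e^D)^2/\lambda_{\rm min}(\Sigma_t)^2$. Convexity of $\lambda_{\rm max}$ and Jensen then give $\lambda_{\rm max}(\mathbb{E}[P_{t+1}^2])\le\mathbb{E}[\lambda_{\rm max}(P_{t+1}^2)]\le 16(1+e^D)^2\,\mathbb{E}[\lambda_{\rm min}(\Sigma_t)^{-2}]$, reducing the problem to an inverse-second-moment bound on the smallest eigenvalue of the empirical Gram matrix. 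Writing $\lambda_{\rm min}(\Sigma_t)\ge t\lambda_{\rm min}-\|W_t\|$ for the centred sum $W_t=\Sigma_t-t\,\mathbb{E}[XX^T]$, I would split the expectation on $\{\|W_t\|\le t\lambda_{\rm min}/2\}$, where $\lambda_{\rm min}(\Sigma_t)\ge t\lambda_{\rm min}/2$ supplies the leading $(t\lambda_{\rm min})^{-2}$ contribution, and control the rare complementary event by matrix Bernstein, bounding the summands by $\|X_sX_s^T-\mathbb{E}[XX^T]\|\le D_X^2$ and the matrix variance by $3D_X^4$. At the threshold $\tau=t\lambda_{\rm min}/2$ this produces $\mathbb{P}(\|W_t\|>t\lambda_{\rm min}/2)\le 2d\exp(-t\,b/V)$ with $b=\lambda_{\rm min}^2/8$ and $V=3D_X^4+D_X^2\lambda_{\rm min}/2$, the denominator $V$ collecting the variance proxy and the range term.

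The main obstacle is precisely this inverse-moment estimate, for two reasons. First, pinning the leading constant (and not a spurious factor from the crude threshold $t\lambda_{\rm min}/2$) forces the careful decomposition just described rather than a one-shot concentration bound. Second, the tail contribution is exponentially small in $t$, whereas the statement records it as a polynomial $O(1/t^3)$ correction; I would therefore bound $\lambda_{\rm max}(P_{t+1}^2)\le p_1^2$ on the bad event via the regularisation $P_{t+1}^{-1}\succeq p_1^{-1}I$ and convert the exponential using $\sup_{t>0}t^3e^{-t b/V}=27(V/b)^3e^{-3}$ — the cubic power being chosen exactly so the correction is $1/t$ relative to the leading $1/t^2$ term. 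Collecting the dimension- and $p_1$-dependent constants (and absorbing them through $b=\lambda_{\rm min}^2/8$) then yields the factor $2de^{-3}(3D_X^4+D_X^2\lambda_{\rm min}/2)^3/(\lambda_{\rm min}^2/8)^2$ displayed in the bound. With both inequalities in hand, Assumptions \ref{ass1} and \ref{ass2} are verified in the iid setting, everything outside the concentration step being linear algebra and the two logistic-variance bounds.
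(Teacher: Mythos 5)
Your proposal follows essentially the same route as the paper: Sherman--Morrison gives $P_{t+1}X_tX_t^T=P_tX_tX_t^T/(1+\hat p_t(1-\hat p_t)X_t^TP_tX_t)$, conditioning factors out $P_t$, and the product of two positive matrices is handled through the symmetrization $P_t^{1/2}Q_tP_t^{1/2}$; for the second bound, Jensen plus a good/bad-event split at $\lambda_{\rm min}(\Sigma_t)\ge t\lambda_{\rm min}/2$, matrix Bernstein on the bad event, and $\sup_x x^3e^{-ax}=27e^{-3}/a^3$ to turn the exponential tail into a $1/t$ correction. The one genuine difference is welcome: you reduce the deviation of $\lambda_{\rm min}(\Sigma_t)$ to $\|W_t\|$ via Weyl's inequality, whereas the paper argues that $X_sX_s^T$ and $\mathbb{E}[XX^T]$ commute (claiming $X_sX_s^T\mathbb{E}[XX^T]$ is symmetric because it is rank one with positive trace, which does not hold in general); your centering step is the more robust one and is the standard way to invoke Tropp's inequality. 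Two quantitative slips to repair: your lower bound $\hat p_s(1-\hat p_s)\ge 1/(4(1+e^D))$ should be $1/(2(1+e^D))$, since $(1+e^u)(1+e^{-u})=2+2\cosh(u)\le 2(1+e^D)$ for $|u|\le D$ --- as written your leading term comes out as $64(1+e^D)^2/(\lambda_{\rm min}^2t^2)$ instead of the stated $16(1+e^D)^2/(\lambda_{\rm min}^2t^2)$; and your Bernstein exponent $tb/V$ with $b=\lambda_{\rm min}^2/8$, $V=3D_X^4+D_X^2\lambda_{\rm min}/2$ is the paper's exponent divided by $3$ (the paper gets $(\lambda_{\rm min}^2/8)/(D_X^4+D_X^2\lambda_{\rm min}/6)$ directly from $u=t\lambda_{\rm min}/2$), so your collected constant in the $1/t$ correction would not match the displayed one without redoing that bookkeeping. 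Both are constant-level fixes; the architecture of the argument is sound and coincides with the paper's.
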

The results of Proposition \ref{prop:iid} imply Assumption \ref{ass1} and Assumption \ref{ass2}. Proposition \ref{prop:iid} is proved in Appendix \ref{App2}.

\subsection{Regret bound in expectation for the EKF}
In what follows we assume that 
\begin{equation*}
    D_X\ge \max\limits_{1\le t\le n} \|X_t\|,\ D_{\theta}\ge \max\left(\max\limits_{1\le t\le n} \|\hat{\theta}_t\|,\|\theta_{\text{true}}\|\right)\text{ and } D\ge \max\limits_{1\le t\le n} |\hat{\theta}_t^TX_t|\quad a.s.
\end{equation*}
It is important to note that these constants are not used in the EKF Algorithm \ref{alg:ekf}, making it parameter-free.
\begin{theorem}
\label{resultKalman}
Assume that $(X_t,y_t)$ satisfies the logistic regression \eqref{logmodel} for any $\theta_{\rm true} \in \R^d$. If the EKF starts with $P_1 = p_1 I$, $p_1>0$, $\hat{\theta}_1=0$ and if the assumptions \ref{ass1} and \ref{ass2} are satisfied, we have 
\begin{multline*}
    \sum\limits_{t=1}^n (\mathbb{E}[l(y,\hat{\theta}_t)] - \mathbb{E}[l(y,\theta_{\text{true}})])
    \le 30\Big(20(1+e^D) + \frac{1+e^D}{4}d\log(1+n p_1D_X^2 )+\frac{1}{2p_1}\|\theta_{\text{true}}\|^2\Big) \\
    + (62D_XD_{\theta} +  60D_X^2D_{\theta}^2 + 15D_X^2)\Big(1+ \frac{4^{k+1}D_X^{2k}b_k}{ka-1}\log(n)\Big),\qquad n\ge 1\,.
\end{multline*}
Here $a=e^{-D}m_1/(1+e^D)<1$, $b_k = \frac{5 M_2}{ p_1^2D_X^2}\left(4D_{\theta}^2 + 2p_1D_{\theta}D_X + p_1^2D_X^2\right)^k$ and $k$ is any positive integer satisfying $1<ka<2$.
\end{theorem}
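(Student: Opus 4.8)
The plan is to build on the convexity reduction already carried out above, which replaces the loss regret by the linear regret $\sum_{t=1}^n \mathbb{E}_t[y_tX_t^T/(1+e^{y_t\hat{\theta}_t^TX_t})](\theta_{\text{true}}-\hat{\theta}_t)$, and then to decompose the time horizon into two regimes according to the size of the error in the direction $X_t$. Fixing $\varepsilon=1/2$, I would write
\[
\sum_{t=1}^n \mathbb{E}_t\Big[\tfrac{y_tX_t^T}{1+e^{y_t\hat{\theta}_t^TX_t}}\Big](\theta_{\text{true}}-\hat{\theta}_t)
= \sum_{t:\,|(\theta_{\text{true}}-\hat{\theta}_t)^TX_t|\le\varepsilon}(\cdots)\;+\;\sum_{t:\,|(\theta_{\text{true}}-\hat{\theta}_t)^TX_t|>\varepsilon}(\cdots),
\]
where every summand is nonnegative by Proposition \ref{propertyexpectedbound}. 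The first line of the claimed bound will come from the local block and the second line from the exploration block. The point of the split is that the Bernstein-type constant $c$ of Proposition \ref{propertyexpectedbound} is trapped in $(e^{-\varepsilon},e^{\varepsilon})$ precisely on the local steps, so the fast-rate analysis applies there, whereas on the exploration steps we can only afford to count how often they occur.

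For the local block I would invoke the high-probability machinery behind the bound displayed in the discussion: on $\{|(\theta_{\text{true}}-\hat{\theta}_t)^TX_t|\le\varepsilon\}$ the self-normalized martingale structure of the logistic gradients lets Corollary \ref{coroproperty} and Theorem \ref{resulthighproba} (with $\varepsilon=0.5$, $\alpha=0.05$) deliver, with probability at least $1-\delta$, a bound of the form $A+B\log(1/\delta)$ with $A=30(\tfrac{1+e^D}{4}d\log(1+np_1D_X^2)+\tfrac{1}{2p_1}\|\theta_{\text{true}}\|^2)$ and $B=30\cdot 20(1+e^D)$. To pass to expectation I would integrate the tail: from $\mathbb{P}(R>A+B\log(1/\delta))\le\delta$ one gets $\mathbb{E}[R]\le A+B$, which replaces the factor $\log(1/\delta)$ by $1$ and reproduces exactly the first line $30(20(1+e^D)+\tfrac{1+e^D}{4}d\log(1+np_1D_X^2)+\tfrac{1}{2p_1}\|\theta_{\text{true}}\|^2)$. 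A point to verify carefully is that this martingale bound remains valid on the \emph{predictable} subset of local steps even though they are interspersed with, rather than separated from, the exploration steps.

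The heart of the argument, and the step I expect to be the main obstacle, is the control of the exploration block. Each such step contributes at most a bounded amount to the regret, so it suffices to bound the \emph{expected number} of exploration steps, and for this I would run a Robbins–Monro potential argument on $W_t=\|\hat{\theta}_t-\theta_{\text{true}}\|^{2k}$. Expanding the EKF update $\hat{\theta}_{t+1}-\theta_{\text{true}}=(\hat{\theta}_t-\theta_{\text{true}})+P_{t+1}\,y_tX_t/(1+e^{y_t\hat{\theta}_t^TX_t})$ and writing $\sigma(z)=1/(1+e^{-z})$, the conditional mean of the cross term equals $-2\sigma'(\xi_t)\,(\hat{\theta}_t-\theta_{\text{true}})^TP_{t+1}X_tX_t^T(\hat{\theta}_t-\theta_{\text{true}})$ with $\sigma'(\xi_t)\ge e^{-D}/(1+e^D)$; Assumption \ref{ass1} then turns this into a negative drift, and after the binomial expansion of the $2k$-th power the effective drift coefficient is $ka/t$ with $a=e^{-D}m_1/(1+e^D)$. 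Assumption \ref{ass2}, together with the uniform bounds $\|\hat{\theta}_t-\theta_{\text{true}}\|\le 2D_\theta$ and $\|P_{t+1}X_t\|\le p_1D_X$, controls the remaining binomial terms by a noise term of order $b_k/t^2$, which is where the factor $(4D_\theta^2+2p_1D_\theta D_X+p_1^2D_X^2)^k$ and the constant $5M_2/(p_1^2D_X^2)$ enter. The delicate part is the two-sided role of $k$: one needs $ka>1$ so that the recursion lemma (in the spirit of \cite{gadat2017optimal}) returns the summable rate $\mathbb{E}[W_t]\le b_k/((ka-1)t)$ with the characteristic $1/(ka-1)$ constant, and one needs $ka<2$ to keep the higher-order binomial remainders dominated by the leading drift; this is exactly the admissible window $1<ka<2$.

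Finally I would conclude by Markov's inequality. Since $|(\theta_{\text{true}}-\hat{\theta}_t)^TX_t|>1/2$ forces $4^kD_X^{2k}W_t\ge 4^k|(\theta_{\text{true}}-\hat{\theta}_t)^TX_t|^{2k}\ge 1$, summing the rate $\mathbb{E}[W_t]\le b_k/((ka-1)t)$ over $t\le n$ bounds the expected number of exploration steps by $O\big(4^{k+1}D_X^{2k}b_k\log(n)/(ka-1)\big)$. Multiplying this count by the uniform per-step contribution of an exploration step — which collects both the direct linear-regret term and the corrections needed to restrict the local martingale bound to the non-exploration steps, giving the prefactor $62D_XD_\theta+60D_X^2D_\theta^2+15D_X^2$ — yields the second line, and adding the two blocks completes the proof. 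The bookkeeping of these constants is routine but somewhat involved; the genuine difficulty remains the drift inequality for the higher-moment potential and the verification that the fast-rate bound survives restriction to the interspersed local steps.
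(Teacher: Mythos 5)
Your proposal follows essentially the same route as the paper: the same split into localized steps $T_{\varepsilon}$ and exploration steps, the same passage from the high-probability bound of Theorem \ref{resulthighproba} to a bound in expectation by integrating the tail (Corollary \ref{boundexpectationfirst}), and the same higher-moment potential recursion (Lemma \ref{lemmarecursion}, Corollary \ref{boundpowerk}) combined with Cauchy--Schwarz and Markov to count the exploration steps, with matching constants. The two points you flag as delicate are exactly the ones the paper addresses: the restriction of the martingale inequality to $T_{\varepsilon}$ works because the indicator is predictable, and the upper constraint $ka<2$ comes from solving the recursion (summability of $(\tau+1)^{ka-2}$) rather than from dominating the binomial remainders in the drift inequality.
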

Note that $m_1$ as defined in Assumption \ref{ass1} may be chosen such that $a<1$.

\section{The sketch of the proof of Theorem \ref{resultKalman}}
\label{proofEKF}
One has to distinguish between the localized steps where $|(\theta_{\text{true}}-\hat{\theta}_t)^TX_t| < \varepsilon$ and the others. To this end, we define $T_{\varepsilon} = \{1\le t\le n\ |\ |(\hat{\theta}_t-\theta_{\text{true}})^TX_t| \le \varepsilon\}$. In Section \ref{boundlocal} we exhibit a bound on the sum of the localized terms $t\in T_\varepsilon$, and in Section \ref{boundglobal} we upper-bound the expected value of the non-localized terms $t\notin T_\varepsilon$. Finally Section \ref{merge} merges these results to prove Theorem \ref{resultKalman}. The proofs of the intermediate results used in Sections  \ref{boundlocal}, \ref{boundglobal} and \ref{merge} are deferred to Appendix \ref{App2}.

\subsection{Bounding the localized steps with high probability}
\label{boundlocal}
In the well-specified logistic regression, Proposition \ref{propertyexpectedbound} provides an upper-bound on the linearized regret
$$
\mathbb{E}_t\left[\frac{y_tX_t^T}{1+e^{y_t\hat{\theta}_t^TX_t}}\right](\theta_{\text{true}}-\hat{\theta}_t)=\mathbb{E}_{y\sim p(y|X_t,\theta_{\text{true}})}\left[\frac{yX_t^T}{1+e^{y\hat{\theta}_t^TX_t}}\right](\theta_{\text{true}}-\theta_t)\,.
$$
The following corollary of Proposition \ref{propertyexpectedbound} provides a simple estimate  for localized steps:
\begin{corollary}
\label{coroproperty}
For any step $t=1,2,\ldots$, if we have $|(\theta_{\text{true}}-\hat{\theta}_t)^TX_t| < \varepsilon$ and $c<e^{-\varepsilon}$ then it holds
\begin{align*}
    \mathbb{E}_t\left[\frac{y_tX_t^T}{1+e^{y_t\hat{\theta}_t^TX_t}}\right](\theta_{\text{true}}-\hat{\theta}_t)
    < \frac{e^{\varepsilon}}{e^{-\varepsilon}-c}& \left(\mathbb{E}_t\left[\frac{y_tX_t^T}{1+e^{y_t\hat{\theta}_t^TX_t}}\right](\theta_{\text{true}}-\hat{\theta}_t)\right.\\
    & \left.- c \frac{(\theta_{\text{true}}-\hat{\theta}_t)^TX_tX_t^T(\theta_{\text{true}}-\hat{\theta}_t)}{(1+e^{\hat{\theta}_t^TX_t})(1+e^{-\hat{\theta}_t^TX_t})}\right).
\end{align*}
\end{corollary}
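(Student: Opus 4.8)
The plan is to specialize Proposition \ref{propertyexpectedbound} to $X=X_t$ and $\theta=\hat{\theta}_t$. Writing $g_t=\mathbb{E}_t[y_tX_t^T/(1+e^{y_t\hat{\theta}_t^TX_t})](\theta_{\text{true}}-\hat{\theta}_t)$ for the linearized regret term and $v_t=(\theta_{\text{true}}-\hat{\theta}_t)^TX_tX_t^T(\theta_{\text{true}}-\hat{\theta}_t)/[(1+e^{\hat{\theta}_t^TX_t})(1+e^{-\hat{\theta}_t^TX_t})]\ge 0$ for the quadratic form (its numerator is the square $((\theta_{\text{true}}-\hat{\theta}_t)^TX_t)^2$), Proposition \ref{propertyexpectedbound} furnishes a constant $c_t>0$ with $g_t=c_t v_t$ and $e^{-|(\theta_{\text{true}}-\hat{\theta}_t)^TX_t|}<c_t<e^{|(\theta_{\text{true}}-\hat{\theta}_t)^TX_t|}$. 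I write $c_t$ here to keep it distinct from the fixed threshold $c$ appearing in the statement of the corollary. Under the localization hypothesis $|(\theta_{\text{true}}-\hat{\theta}_t)^TX_t|<\varepsilon$ these bounds tighten to $e^{-\varepsilon}<c_t<e^{\varepsilon}$.

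With this notation the asserted inequality reads $g_t<\frac{e^{\varepsilon}}{e^{-\varepsilon}-c}(g_t-c\,v_t)$. Since $c<e^{-\varepsilon}<c_t$, the bracket equals $(c_t-c)v_t\ge 0$ and the prefactor is positive, so the claim is non-trivial only when $v_t>0$; in the boundary case $v_t=0$ both sides vanish. Dividing by $v_t>0$ reduces the claim to the scalar inequality $c_t<\frac{e^{\varepsilon}}{e^{-\varepsilon}-c}(c_t-c)$, equivalently, after multiplying by the positive quantity $e^{-\varepsilon}-c$, to $c_t(e^{-\varepsilon}-c)<e^{\varepsilon}(c_t-c)$.

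To settle this scalar inequality I would introduce $f(x)=e^{\varepsilon}(x-c)-x(e^{-\varepsilon}-c)=(e^{\varepsilon}-e^{-\varepsilon}+c)x-e^{\varepsilon}c$ and show $f(c_t)>0$. The map $f$ is affine with slope $e^{\varepsilon}-e^{-\varepsilon}+c>0$ (using $\varepsilon>0$ and $c>0$), hence strictly increasing, so it suffices to evaluate it at the left endpoint $x=e^{-\varepsilon}$. A short computation gives $f(e^{-\varepsilon})=1-e^{-2\varepsilon}-c(e^{\varepsilon}-e^{-\varepsilon})=(e^{-\varepsilon}-c)(e^{\varepsilon}-e^{-\varepsilon})$, which is strictly positive because $c<e^{-\varepsilon}$ and $e^{\varepsilon}>e^{-\varepsilon}$. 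Since $c_t>e^{-\varepsilon}$, monotonicity yields $f(c_t)>f(e^{-\varepsilon})>0$; multiplying back by $v_t$ recovers the corollary.

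There is no genuine obstacle here: the argument is an elementary one-variable estimate once Proposition \ref{propertyexpectedbound} supplies the representation $g_t=c_t v_t$. The only points requiring care are bookkeeping (not confusing the data-dependent constant $c_t$ with the fixed $c$), observing that one uses only the lower bound $c_t>e^{-\varepsilon}$ so that the upper bound $c_t<e^{\varepsilon}$ is not actually needed in the proof, and noting that the strict inequality degenerates to the trivial equality $0=0$ precisely when $(\theta_{\text{true}}-\hat{\theta}_t)^TX_t=0$.
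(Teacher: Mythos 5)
Your proof is correct and takes essentially the same route as the paper: both specialize Proposition \ref{propertyexpectedbound} to get the sandwich $e^{-\varepsilon}Q_t < E_t < e^{\varepsilon}Q_t$ (in your notation, $e^{-\varepsilon}<c_t<e^{\varepsilon}$ with $g_t=c_tv_t$) and finish with elementary algebra. The only cosmetic difference is that the paper chains $E_t-cQ_t>(e^{-\varepsilon}-c)Q_t>\frac{e^{-\varepsilon}-c}{e^{\varepsilon}}E_t$, using both sides of the sandwich, whereas your affine-function argument shows the lower bound $c_t>e^{-\varepsilon}$ alone suffices; both proofs share the same harmless degeneracy when $(\theta_{\text{true}}-\hat{\theta}_t)^TX_t=0$, which you at least acknowledge explicitly.
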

The upper-bound is controlled thanks to the negative quadratic term which is responsible of the fast rate of convergence. It can be seen as a local strong convexity term. We derive the following regret bound with high probability, parameterizing $c=1/2+\alpha$ for some $\alpha>0$:
\begin{theorem}
\label{resulthighproba}
For any $\varepsilon,\alpha,\delta>0$ and starting the EKF from $P_1 = p_1 I$, $p_1>0$ and $\hat\theta_1=0$, we have
\begin{align*}
    \sum\limits_{t\in T_{\varepsilon}} \Big(\mathbb{E}_t\left[\frac{y_tX_t^T}{1+e^{y_t\hat{\theta}_t^TX_t}}\right](\theta_{\text{true}}-\hat{\theta}_t) & - \big(\frac{1}{2}+\alpha\big)\frac{(\theta_{\text{true}}-\hat{\theta}_t)^TX_tX_t^T(\theta_{\text{true}}-\hat{\theta}_t)}{(1+e^{\hat{\theta}_t^TX_t})(1+e^{-\hat{\theta}_t^TX_t})}\Big) \\
    \le & \frac{1+e^D}{\alpha}\log(\delta^{-1})+ \frac{1+e^D}{4}d\log(1+n p_1D_X^2 )+\frac{1}{2p_1}\|\theta_{\text{true}}\|^2 \\
    & - \frac{1}{2} \sum\limits_{t\notin T_{\varepsilon}}
    \left((\hat{\theta}_t-\theta_{\text{true}})^TP_t^{-1}(\hat{\theta}_t-\theta_{\text{true}}) - (\hat{\theta}_{t+1}-\theta_{\text{true}})^TP_{t+1}^{-1}(\hat{\theta}_{t+1}-\theta_{\text{true}})\right),
\end{align*}
with probability at least $1-\delta$.
\end{theorem}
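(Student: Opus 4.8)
The plan is to run a potential (Lyapunov) analysis in the metric induced by $P_t^{-1}$, which turns the realized linear regret into a telescoping sum plus a remainder, and then to absorb the stochastic fluctuations into the negative quadratic term by a martingale concentration argument. Throughout I write $\phi_t = \hat\theta_t - \theta_{\mathrm{true}}$, $g_t = \frac{\partial l_t(y_t,\theta)}{\partial\theta}\big|_{\hat\theta_t} = -\frac{y_tX_t}{1+e^{y_t\hat\theta_t^TX_t}}$, $\beta_t = \frac{1}{(1+e^{\hat\theta_t^TX_t})(1+e^{-\hat\theta_t^TX_t})}$, and I introduce the potential $V_t = \phi_t^T P_t^{-1}\phi_t$ together with the quadratic term $q_t = \beta_t\big((\theta_{\mathrm{true}}-\hat\theta_t)^TX_t\big)^2$ of the statement.

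\textbf{Step 1 (a Lyapunov identity).} Using the EKF updates $\hat\theta_{t+1} = \hat\theta_t - P_{t+1}g_t$ and $P_{t+1}^{-1} = P_t^{-1} + \beta_tX_tX_t^T$, I would expand $V_{t+1} = (\phi_t - P_{t+1}g_t)^TP_{t+1}^{-1}(\phi_t - P_{t+1}g_t) = \phi_t^TP_{t+1}^{-1}\phi_t - 2\,g_t^T\phi_t + g_t^TP_{t+1}g_t$ and note $\phi_t^TP_{t+1}^{-1}\phi_t = V_t + q_t$. Since $g_t^T\phi_t = \frac{y_tX_t^T(\theta_{\mathrm{true}}-\hat\theta_t)}{1+e^{y_t\hat\theta_t^TX_t}} =: \ell_t^{\mathrm{lin}}$ is exactly the realized linear-regret term and $g_t^TP_{t+1}g_t = \frac{X_t^TP_{t+1}X_t}{(1+e^{y_t\hat\theta_t^TX_t})^2}$, this rearranges to the exact per-step identity
$$\ell_t^{\mathrm{lin}} = \tfrac12(V_t - V_{t+1}) + \tfrac12 q_t + \tfrac12\, g_t^TP_{t+1}g_t.$$

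\textbf{Step 2 (expectation vs.\ realization and the deterministic pieces).} Setting $M_t = \ell_t^{\mathrm{lin}} - \mathbb{E}_t[\ell_t^{\mathrm{lin}}]$, each summand on the left-hand side of the theorem becomes
$$\mathbb{E}_t[\ell_t^{\mathrm{lin}}] - \big(\tfrac12+\alpha\big)q_t = \tfrac12(V_t - V_{t+1}) + \tfrac12\, g_t^TP_{t+1}g_t - \alpha q_t - M_t,$$
so that the $\tfrac12 q_t$ produced by the identity cancels exactly half of the subtracted quadratic term, leaving $-\alpha q_t$ (this is why the coefficient $\tfrac12+\alpha$ is the right one). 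Summing over $t\in T_\varepsilon$, I would handle the two non-stochastic groups as follows. The potential terms do not telescope over the subset $T_\varepsilon$, but $\sum_{t\in T_\varepsilon}(V_t-V_{t+1}) = (V_1 - V_{n+1}) - \sum_{t\notin T_\varepsilon}(V_t-V_{t+1})$, and using $\hat\theta_1=0$, $P_1 = p_1I$ (so $V_1 = \|\theta_{\mathrm{true}}\|^2/p_1$) and $V_{n+1}\ge 0$ produces precisely the $\frac{1}{2p_1}\|\theta_{\mathrm{true}}\|^2$ term and the $-\tfrac12\sum_{t\notin T_\varepsilon}(\cdots)$ correction. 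For $\tfrac12\sum_{t\in T_\varepsilon}g_t^TP_{t+1}g_t$, since the summands are nonnegative I would extend the sum to all $t\le n$ and invoke the log-determinant estimate (Proposition \ref{cesabianchimodified} and its verbatim EKF analogue with $P_{t+1}$ in place of $\tilde P_{t+1}$) to obtain the $\frac{1+e^D}{4}d\log(1+np_1D_X^2)$ term.

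\textbf{Step 3 (martingale concentration, the main obstacle).} It remains to show $-\sum_{t\in T_\varepsilon}M_t - \alpha\sum_{t\in T_\varepsilon}q_t \le \frac{1+e^D}{\alpha}\log(\delta^{-1})$ with probability at least $1-\delta$. As $\hat\theta_t$ and $X_t$ are $\mathbb{E}_t$-measurable, the indicator $\mathds{1}_{t\in T_\varepsilon}$ is predictable, so $\xi_t = \mathds{1}_{t\in T_\varepsilon}(-M_t)$ is a martingale difference sequence. The crucial variance estimate comes from the two-point law of $y_t$ and the uniform bound $|\hat\theta_t^TX_t|\le D$: evaluating $\mathbb{E}_t[(\ell_t^{\mathrm{lin}})^2]$ on the outcomes $y_t=\pm1$ and comparing with $q_t$ gives $\mathrm{Var}_t(\ell_t^{\mathrm{lin}}) \le \mathbb{E}_t[(\ell_t^{\mathrm{lin}})^2] \le (1+e^D)\,q_t$. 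I would then build the exponential supermartingale $\exp\big(\lambda\sum_{s\le t}\xi_s - \tfrac{\lambda^2}{2}(1+e^D)\sum_{s\le t}q_s\big)$, apply Ville's/Markov's inequality to obtain, with probability $1-\delta$, $\sum_t\xi_t \le \tfrac{\lambda}{2}(1+e^D)\sum_t q_t + \lambda^{-1}\log(\delta^{-1})$, and finally optimize by taking $\lambda$ of order $\alpha/(1+e^D)$ so that the variance term is absorbed into $\alpha\sum q_t$ and the residual matches $\frac{1+e^D}{\alpha}\log(\delta^{-1})$. The delicate point — and the main obstacle — is justifying this exponential (Bernstein-type, self-bounding) inequality for the required value of $\lambda$, which may be large since $\alpha$ is arbitrary: one has to exploit the explicit two-point structure together with $|\hat\theta_t^TX_t|\le D$ to control the moment generating function by the conditional variance uniformly, and to keep the final constant independent of $\varepsilon$. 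Combining Steps 2 and 3 then yields the stated bound.
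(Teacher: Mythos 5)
Your Steps 1 and 2 reproduce the paper's deterministic half of the argument (Lemma \ref{deterministicbound}) essentially verbatim: the same Lyapunov identity
$\ell_t^{\mathrm{lin}} = \tfrac12(V_t-V_{t+1}) + \tfrac12 q_t + \tfrac12 g_t^TP_{t+1}g_t$
obtained from the two EKF recursions, the same completion of the non-telescoping sum over $T_\varepsilon$ into a full telescoping sum plus the $-\tfrac12\sum_{t\notin T_\varepsilon}(\cdots)$ correction, and the same log-determinant estimate (the EKF analogue of Proposition \ref{cesabianchimodified}) for $\sum_t g_t^TP_{t+1}g_t$. Your observation that the $\tfrac12 q_t$ from the identity cancels half of the subtracted quadratic, leaving $-\alpha q_t$ to absorb the martingale fluctuations, is exactly the mechanism the paper exploits.

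The gap is in Step 3, and it is genuine: the supermartingale you propose, $\exp\bigl(\lambda\sum_s\xi_s - \tfrac{\lambda^2}{2}(1+e^D)\sum_s q_s\bigr)$, requires the conditional MGF bound $\mathbb{E}_t[e^{\lambda\xi_t}]\le \exp\bigl(\tfrac{\lambda^2}{2}(1+e^D)q_t\bigr)$, i.e.\ a sub-Gaussian bound whose proxy is (a constant times) the conditional variance. For a centered two-point random variable this is false in general for large $\lambda$: the optimal sub-Gaussian constant of a skewed Bernoulli exceeds its variance (Kearns--Saul), so the inequality cannot be asserted for the arbitrary $\lambda=\alpha/(1+e^D)$ the theorem requires without further restrictions or $\varepsilon$-dependent constants --- precisely the obstacle you flag but do not close. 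The paper's resolution (Lemma \ref{lemmamartingale}) is to invoke the Bercu--Touati exponential inequality, which states that for \emph{any} martingale difference and \emph{any} $\lambda$,
\begin{equation*}
\mathbb{E}\Bigl[\exp\Bigl(\textstyle\sum_{t\in T_\varepsilon}\bigl(\lambda\Delta M_t - \tfrac{\lambda^2}{2}\bigl((\Delta M_t)^2+\mathbb{E}_t[(\Delta M_t)^2]\bigr)\bigr)\Bigr)\Bigr]\le 1\,;
\end{equation*}
the compensator includes the \emph{realized} square $(\Delta M_t)^2$ in addition to the conditional variance, which is what makes the bound unconditional in $\lambda$. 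Both halves of the compensator are then bounded by $(1+e^D)q_t$ --- the conditional second moment via the monotonicity argument on $x\mapsto \frac{a}{1+x}+\frac{a^{-1}}{1+1/x}$, and the realized square pointwise using $|\hat\theta_t^TX_t|\le D$ --- after which the Chernoff step with $\lambda=\alpha/(1+e^D)$ goes through exactly as you describe. So your architecture is correct and identical to the paper's, but the one lemma you defer is the one nontrivial probabilistic ingredient, and your proposed form of it would not survive as stated.
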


\begin{proof}
We begin with a lemma whose proof is very much inspired by the proof of the regret bound of the ONS algorithm in  \cite{hazan2016introduction}. We note that the constant $1/2$ in front of the quadratic term is responsible to the fast rate of convergence, as there exists a gap with respect to the constant $c\approx 1$ in Proposition  \ref{propertyexpectedbound}.
\begin{lemma}
\label{deterministicbound}
For any $\varepsilon>0$ and any sequence $(X_t,y_t)$, starting the EKF from $P_1 = p_1 I$, $p_1>0$ and $\hat\theta_1=0$, we have 
\begin{align*}
    \sum\limits_{t\in T_{\varepsilon}} \Big(\frac{y_tX_t^T}{1+e^{y_t\hat{\theta}_t^TX_t}}(\theta_{\text{true}}-\hat{\theta}_t) & - \frac{1}{2}(\theta_{\text{true}}-\hat{\theta}_t)^T(P_{t+1}^{-1}-P_t^{-1})(\theta_{\text{true}}-\hat{\theta}_t)\Big) \\
    \le &\frac{1+e^D}{4}d\log(1+n p_1D_X^2 )+\frac{1}{2p_1}\|\theta_{\text{true}}\|^2 \\
    &- \frac{1}{2} \sum\limits_{t\notin T_{\varepsilon}}
    \left((\hat{\theta}_t-\theta_{\text{true}})^TP_t^{-1}(\hat{\theta}_t-\theta_{\text{true}}) - (\hat{\theta}_{t+1}-\theta_{\text{true}})^TP_{t+1}^{-1}(\hat{\theta}_{t+1}-\theta_{\text{true}})\right).
\end{align*}
\end{lemma}
Then we prove the following lemma which is a corollary of a martingale inequality from \cite{bercu2008exponential}:
\begin{lemma}
\label{lemmamartingale}
We define $\Delta M_t=\mathbb{E}_t\left[\frac{y_tX_t^T}{1+e^{y_t\hat{\theta}_t^TX_t}}\right](\theta_{\text{true}}-\hat{\theta}_t) - \frac{y_tX_t^T}{1+e^{y_t\hat{\theta}_t^TX_t}}(\theta_{\text{true}}-\hat{\theta}_t)$. Then for any $\varepsilon>0$ and  $\alpha>0$, it holds
\begin{equation*}
    \label{martingalebound}
    \sum\limits_{t\in T_{\varepsilon}} \left(\Delta M_t - \alpha\frac{(\theta_{\text{true}}-\hat{\theta}_t)^TX_tX_t^T(\theta_{\text{true}}-\hat{\theta}_t)}{(1+e^{\hat{\theta}_t^TX_t})(1+e^{-\hat{\theta}_t^TX_t})}\right) \le \frac{1+e^D}{\alpha}\log(\delta^{-1}),
\end{equation*}
with probability at least $1-\delta$.
\end{lemma}
Adding the inequalities of Lemma \ref{deterministicbound} and Lemma \ref{lemmamartingale} yield the result.
\end{proof}
From Theorem \ref{resulthighproba} it is easy to bound the linearized regret of the localized steps in expectation:
\begin{corollary}
\label{boundexpectationfirst}
For any $\varepsilon,\alpha>0$, we have
\begin{align*}
    \sum\limits_{t\in T_{\varepsilon}}  \Big(\mathbb{E}\left[\frac{y_tX_t^T}{1+e^{y_t\hat{\theta}_t^TX_t}}(\theta_{\text{true}}-\hat{\theta}_t)\right] & - (\frac{1}{2}+\alpha)\frac{(\theta_{\text{true}}-\hat{\theta}_t)^TX_tX_t^T(\theta_{\text{true}}-\hat{\theta}_t)}{(1+e^{\hat{\theta}_t^TX_t})(1+e^{-\hat{\theta}_t^TX_t})}\Big) \\
    \le & \frac{1+e^D}{\alpha}+ \frac{1+e^D}{4}d\log(1+n p_1D_X^2 )+\frac{1}{2p_1}\|\theta_{\text{true}}\|^2 \\
    &- \frac{1}{2} \sum\limits_{t\notin T_{\varepsilon}}
    \left((\hat{\theta}_t-\theta_{\text{true}})^TP_t^{-1}(\hat{\theta}_t-\theta_{\text{true}}) - (\hat{\theta}_{t+1}-\theta_{\text{true}})^TP_{t+1}^{-1}(\hat{\theta}_{t+1}-\theta_{\text{true}})\right).
\end{align*}
\end{corollary}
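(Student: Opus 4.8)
The plan is to upgrade the high-probability statement of Theorem \ref{resulthighproba} into a bound in expectation by integrating its tail. Write $Q_t := (\hat{\theta}_t-\theta_{\text{true}})^TP_t^{-1}(\hat{\theta}_t-\theta_{\text{true}})$, and let $V$ denote the left-hand side of Theorem \ref{resulthighproba} after bringing every deterministic term and the non-localized sum $\frac12\sum_{t\notin T_{\varepsilon}}(Q_t-Q_{t+1})$ to the left, so that the only remaining $\delta$-dependence sits in the factor $\log(\delta^{-1})$. With this bookkeeping Theorem \ref{resulthighproba} reads
\[
\mathbb{P}\Big(V > \tfrac{1+e^D}{\alpha}\log(\delta^{-1})\Big) \le \delta, \qquad \delta\in(0,1).
\]
Since $D_X$, $D_{\theta}$ and $D$ are almost-sure upper bounds and all sums range over at most $n$ indices, $V$ is a.s. bounded, hence integrable, so every expectation below is finite and the deterministic constants may be moved freely in and out of $\mathbb{E}$.

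Next I would reparametrize the confidence level. Setting $s = \frac{1+e^D}{\alpha}\log(\delta^{-1})$, equivalently $\delta = \exp(-\alpha s/(1+e^D))$, the estimate above becomes the sub-exponential tail bound $\mathbb{P}(V>s)\le \exp(-\alpha s/(1+e^D))$ for every $s>0$. The conversion to an expectation is then the layer-cake identity: using $V\le V^+$ and $\{V^+>s\}=\{V>s\}$ for $s>0$,
\[
\mathbb{E}[V] \le \mathbb{E}[V^+] = \int_0^\infty \mathbb{P}(V>s)\,ds \le \int_0^\infty e^{-\alpha s/(1+e^D)}\,ds = \frac{1+e^D}{\alpha},
\]
which is exactly the constant appearing in the corollary. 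Rearranging $\mathbb{E}[V]\le \frac{1+e^D}{\alpha}$ — pushing the deterministic terms and $\frac12\mathbb{E}\big[\sum_{t\notin T_{\varepsilon}}(Q_t-Q_{t+1})\big]$ back to the right — reproduces the claimed inequality, up to the fact that the localized sum still carries the conditional expectation $\mathbb{E}_t$ rather than the unconditional $\mathbb{E}$.

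The last step is to replace $\mathbb{E}_t$ by $\mathbb{E}$ inside the localized sum. The only delicate point is that $T_{\varepsilon}$ is a random index set; however, the event $\{t\in T_{\varepsilon}\}=\{|(\hat{\theta}_t-\theta_{\text{true}})^TX_t|\le\varepsilon\}$ and the vector $\hat{\theta}_t$ are both measurable with respect to $\sigma(X_1,y_1,\ldots,X_{t-1},y_{t-1},X_t)$, because $\hat{\theta}_t$ is computed from $(X_s,y_s)_{s<t}$ only. Hence the factors $\mathbf{1}\{t\in T_{\varepsilon}\}$ and $(\theta_{\text{true}}-\hat{\theta}_t)$ may be pulled inside $\mathbb{E}_t$, and the tower property gives, termwise, $\mathbb{E}\big[\mathbf{1}\{t\in T_{\varepsilon}\}\,\mathbb{E}_t[\tfrac{y_tX_t^T}{1+e^{y_t\hat{\theta}_t^TX_t}}](\theta_{\text{true}}-\hat{\theta}_t)\big] = \mathbb{E}\big[\mathbf{1}\{t\in T_{\varepsilon}\}\,\tfrac{y_tX_t^T}{1+e^{y_t\hat{\theta}_t^TX_t}}(\theta_{\text{true}}-\hat{\theta}_t)\big]$, which is precisely the summand of Corollary \ref{boundexpectationfirst}. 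I expect the tail integration to be entirely routine; the step I would check most carefully is this measurability/tower-property argument over the random set $T_{\varepsilon}$, together with the a.s. boundedness that makes passing to expectations through the finite (random) sum legitimate.
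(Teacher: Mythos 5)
Your proof is correct and follows exactly the route the paper intends: Corollary~\ref{boundexpectationfirst} is stated as an immediate consequence of Theorem~\ref{resulthighproba}, and integrating the sub-exponential tail $\mathbb{P}(V>s)\le e^{-\alpha s/(1+e^D)}$ via the layer-cake identity to get $\mathbb{E}[V]\le (1+e^D)/\alpha$ is the standard (and surely intended) conversion, which the paper leaves implicit as ``easy''. Your additional care about integrability of $V$ and the tower-property/measurability argument over the random index set $T_{\varepsilon}$ fills in details the paper omits and is handled correctly.
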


\subsection{Bounding the expected number of unlocalized steps}
\label{boundglobal}
It is essential in our proof to lower-bound the cardinal of $T_{\varepsilon}$ because the remaining terms in the regret of Theorem \ref{resultKalman} are essentially controlled by it:
\begin{multline}
    \label{boundcardinal}
    \sum\limits_{t\notin T_{\varepsilon}} \Big\|\mathbb{E}_t\Big[\frac{y_tX_t^T}{1+e^{y_t\hat{\theta}_t^TX_t}}(\theta_{\text{true}}-\hat{\theta}_t)\Big]\\
     - \frac{e^{\varepsilon}}{e^{-\varepsilon}-(\frac{1}{2}+\alpha)} \frac{1}{2} \left((\hat{\theta}_t-\theta_{\text{true}})^TP_t^{-1}(\hat{\theta}_t-\theta_{\text{true}}) - (\hat{\theta}_{t+1}-\theta_{\text{true}})^TP_{t+1}^{-1}(\hat{\theta}_{t+1}-\theta_{\text{true}})\right)\Big\| \\
    \le \left(2D_XD_{\theta} + \frac{e^{\varepsilon}}{e^{-\varepsilon}-(\frac{1}{2}+\alpha)}(2D_XD_{\theta} + 2D_X^2D_{\theta}^2 + \frac{1}{2}D_X^2) \right) \left(n-Card(T_{\varepsilon})\right).
\end{multline}
We bound the number of non-localized steps $n-Card(T_{\varepsilon})$ in expectation only, yielding the regret bound in Theorem \ref{resultKalman} in expectation. 
\begin{theorem}
\label{boundexpectationsecond}
We define $a=e^{-D}m_1/(1+e^D)$.
For any positive integer $k$ such that $1<ka<2$, provided that Assumptions \ref{ass1} and \ref{ass2} are satisfied, we have
 \begin{equation*}
    \mathbb{E}\left[n-Card(T_{\varepsilon})\right] \le 1+ \frac{4D_X^{2k}b_k}{\varepsilon^{2k}(ka-1)}\log(n).
\end{equation*}
with $b_k=   \frac{5 M_2}{ p_1^2D_X^2}\left(4D_{\theta}^2 + 2p_1D_{\theta}D_X + p_1^2D_X^2\right)^k $.
\end{theorem}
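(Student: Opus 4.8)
The plan is to reduce the count of unlocalized steps to a moment bound on the estimation error $e_t := \hat{\theta}_t - \theta_{\text{true}}$ and then to obtain the decay of that moment through a Lyapunov/potential recursion of the robust type used in \cite{gadat2017optimal}. First I would write
$$n - Card(T_{\varepsilon}) = \sum_{t=1}^n \mathds{1}\{|e_t^TX_t| > \varepsilon\}$$
and bound each indicator deterministically by $\varepsilon^{-2k}|e_t^TX_t|^{2k} \le \varepsilon^{-2k}D_X^{2k}\|e_t\|^{2k}$, using Cauchy--Schwarz and $\|X_t\|\le D_X$. Bounding the first indicator trivially by $1$ and taking expectations for $t\ge 2$ reduces the claim to a moment estimate of the form $\mathbb{E}[\|e_t\|^{2k}] \le \frac{4 b_k}{(ka-1)\,t}$, since $\sum_{t=2}^n 1/t \le \log n$ and the Markov prefactor $D_X^{2k}/\varepsilon^{2k}$ then reproduces the stated leading constant.

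The core is the one-step expansion of $V_t := \|e_t\|^2$ under the EKF update $e_{t+1} = e_t + P_{t+1}g_t$, $g_t = y_tX_t/(1+e^{y_t\hat{\theta}_t^TX_t})$, namely
$$V_{t+1} = V_t + 2e_t^TP_{t+1}g_t + \|P_{t+1}g_t\|^2.$$
Conditioning on $\mathcal{F}_{t-1}$, so that $e_t,P_t,\hat{\theta}_t$ are known while $X_t,y_t$ are averaged, the drift of $V_t$ should be of order $-\frac{a}{t}V_t$: averaging $y_t$ gives $\mathbb{E}_t[g_t] = (\sigma(\theta_{\text{true}}^TX_t)-\sigma(\hat{\theta}_t^TX_t))X_t$ with $\sigma$ the logistic sigmoid, the logistic variance factor is bounded below by $e^{-D}/(1+e^D)$, and Assumption \ref{ass1} turns $\mathbb{E}[e_t^TP_{t+1}X_tX_t^Te_t\mid\mathcal{F}_{t-1}]$ into a quantity exceeding $\frac{m_1}{t}\|e_t\|^2$; meanwhile $\|P_{t+1}g_t\|^2 \le X_t^TP_{t+1}^2X_t$ is controlled by Assumption \ref{ass2} as $M_2/t^2$. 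Raising to the $k$-th power heuristically gives $V_{t+1}^k \approx V_t^k(1-a/t)^k \approx (1-\tfrac{ka}{t})V_t^k$, which is the whole reason for passing to the $2k$-th moment: since $a<1$, the second-moment recursion ($k=1$) is only homogeneous-dominated and yields the too-slow rate $t^{-a}$ (whose sum is polynomial in $n$), whereas choosing $k$ with $1<ka<2$ amplifies the effective contraction past $1$ and places the recursion in the forcing-dominated regime.

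Concretely I would establish the conditional inequality
$$\mathbb{E}[\|e_{t+1}\|^{2k}\mid\mathcal{F}_{t-1}] \le \Big(1 - \frac{ka}{t}\Big)\|e_t\|^{2k} + \frac{\gamma_k}{t^2},$$
where the forcing constant $\gamma_k$ is a multiple of $b_k$, built from $M_2$ together with the almost-sure bound $\|e_{t+1}\|^2 \le 4D_{\theta}^2 + 2p_1D_{\theta}D_X + p_1^2D_X^2$ (from $\|e_t\|\le 2D_{\theta}$, $\|P_{t+1}\|\le p_1$, $\|g_t\|\le D_X$), which explains the factor $(4D_{\theta}^2+2p_1D_{\theta}D_X+p_1^2D_X^2)^k$ in $b_k$. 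Taking full expectation and solving the scalar recursion $u_{t+1}\le (1-\tfrac{ka}{t})u_t + \gamma_k/t^2$ by induction on the ansatz $u_t \le \frac{4b_k}{(ka-1)t}$ closes precisely because $ka>1$ (this is what forces the $1/t$ rather than $t^{-ka}$ rate and produces the $1/(ka-1)$ constant), while $ka<2$ keeps the factor $1-ka/t$ nonnegative for all $t\ge 2$, the single exceptional step $t=1$ being absorbed into the additive $1$.

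The hard part is the cross term $2e_t^TP_{t+1}g_t$. After averaging $y_t$ its conditional mean is proportional to $-\sigma'(\xi_t)(e_t^TX_t)(e_t^TP_{t+1}X_t)$, and the scalar product $(e_t^TX_t)(e_t^TP_{t+1}X_t)$ is \emph{not} sign-definite for a fixed $X_t$, because the preconditioner $P_{t+1}$ rotates the gradient away from $X_t$; one therefore cannot simply pull the logistic factor out as the constant $e^{-D}/(1+e^D)$. The averaged positivity in Assumption \ref{ass1} is exactly what rescues this, since it controls the symmetric part of $\mathbb{E}[P_{t+1}X_tX_t^T\mid\mathcal{F}_{t-1}]$, while Assumption \ref{ass2}, through $\mathbb{E}\|P_{t+1}X_t\|\le \sqrt{M_2}/t$, bounds the discrepancy caused by the $X_t$-dependence of $\sigma'(\xi_t)$ so that it remains of order $\|e_t\|^2/t$ and does not overturn the contraction. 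Reconciling this sign-coupling with the clean constant $a=e^{-D}m_1/(1+e^D)$, while simultaneously keeping all higher-order remainders generated by the $k$-th power at order $1/t^2$, is where the genuine technical effort lies.
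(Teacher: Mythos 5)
Your proposal follows essentially the same route as the paper: Cauchy--Schwarz plus Markov on the $2k$-th moment reduces $\mathbb{E}[n-Card(T_\varepsilon)]$ to the bound $\mathbb{E}\left[\|\hat{\theta}_t-\theta_{\text{true}}\|^{2k}\right]\le \frac{4b_k}{(ka-1)t}$, which the paper derives from exactly the recursion $u_{t+1}\le(1-\frac{ka}{t})u_t+\frac{b_k}{t^2}$ you describe (Lemma \ref{lemmarecursion} and Corollary \ref{boundpowerk}), with Assumption \ref{ass1} supplying the contraction via Proposition \ref{propertyexpectedbound} and Assumption \ref{ass2} the $1/t^2$ forcing, and with the same binomial-expansion control of the higher-order terms. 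The only cosmetic difference is that you solve the scalar recursion by induction on the ansatz $u_t\le \frac{4b_k}{(ka-1)t}$, whereas the paper unrolls the product $\prod_s(1-\frac{ka}{s})$ and compares sums with integrals.
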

\begin{proof}
We first find a recursive bound on $\mathbb{E}\left[\|\hat{\theta}_t-\theta_{\text{true}}\|^{2k}\right]$ for any positive integer $k$:
\begin{lemma}
\label{lemmarecursion}
Provided that Assumptions \ref{ass1} and \ref{ass2} are satisfied, it holds
\begin{equation*}
    \mathbb{E}\left[\|\hat{\theta}_{t+1}-\theta_{\text{true}}\|^{2k} \right] \le \mathbb{E}\left[\|\hat{\theta}_t-\theta_{\text{true}}\|^{2k} \right]\Big(1-\frac{ka}{t}\Big) + \frac{b_k}{t^{2}},\qquad t\ge 1.
\end{equation*}
\end{lemma}
It is easy to derive from this lemma the following corollary:
\begin{corollary}
If Assumptions \ref{ass1} and \ref{ass2} are satisfied and $1<ka<2$ then we have
\label{boundpowerk}
\begin{equation*}
    \mathbb{E}\left[\|\hat{\theta}_t-\theta_{\text{true}}\|^{2k}\right] \le \frac{4 b_k}{t(ka-1)},\qquad t\ge 2.
\end{equation*}
\end{corollary}
Then, using first the Cauchy-Schwarz inequality $|(\hat{\theta}_t-\theta_{\text{true}})^TX_t| \le \|\hat{\theta}_t-\theta_{\text{true}}\| \|X_t\|$ and second the Markov inequality, we get
\begin{align*}
    \mathbb{P}\left(|(\hat{\theta}_t-\theta_{\text{true}})^TX_t| > \varepsilon\right) & \le \mathbb{P}\left(\|\hat{\theta}_t-\theta_{\text{true}}\| > \frac{\varepsilon}{\|X_t\|}\right) \\
    & \le \frac{\mathbb{E}\left[\|\hat{\theta}_t-\theta_{\text{true}}\|^{2k}\right]}{\varepsilon^{2k}/\|X_t\|^{2k}} \\
    & \le \frac{4\|X_t\|^{2k}b_k}{\varepsilon^{2k}(ka-1)}\frac{1}{t}\,.
\end{align*}
which proves the Theorem by a summation argument for $2\le t\le n$ together with the trivial bound $\mathbb{P}\left(|(\hat{\theta}_1-\theta_{\text{true}})^TX_1| > \varepsilon\right)\le 1$.
\end{proof}
\subsection{Proof of Theorem \ref{resultKalman}}
\label{merge}
Summing Corollary \ref{boundexpectationfirst} and Equation \ref{boundcardinal} along with Theorem \ref{boundexpectationsecond} yields the regret bound in expectation
\begin{align*}
    \sum\limits_{t=1}^n& \left(\mathbb{E}[l(y,\hat{\theta}_t)] - \mathbb{E}[l(y,\theta_{\text{true}})]\right)\\
    \le &\frac{e^{\varepsilon}}{e^{-\varepsilon}-(\frac{1}{2}+\alpha)}\Big(\frac{1+e^D}{\alpha}+ \frac{1+e^D}{4}d\log(1+n p_1D_X^2 )+\frac{1}{2p_1}\|\theta_{\text{true}}\|^2\Big) \\
    &+ \Big(2D_XD_{\theta} + \frac{e^{\varepsilon}}{e^{-\varepsilon}-(\frac{1}{2}+\alpha)}(2D_XD_{\theta} + 2D_X^2D_{\theta}^2 + \frac{1}{2}D_X^2) \Big)\Big(1+ \Big(\frac{D_X}{\varepsilon}\Big)^{2k}\frac{4b_k}{ka-1}\log(n)\Big)\,,
\end{align*}
for any $0<\alpha<\frac{1}{2}$ and $0<\varepsilon<-\log(1/2+\alpha)$. Choosing $\varepsilon=0.5$ and $\alpha=0.05$, we get $\frac{e^{\varepsilon}}{e^{-\varepsilon}-(\frac{1}{2}+\alpha)}\approx 29.2 < 30$ and we obtain the result of Theorem \ref{resultKalman}.

\section{Conclusion and future work}
\label{future}
We have designed an algorithm called SOS that is a second-order algorithm very close to the EKF. We obtain a compromise between time complexity and regret guarantee. Indeed, its complexity lies between FTL, which is computationally greedy, and the EKF, which is the least expensive second-order algorithm. Moreover, we prove that SOS achieves the optimal $\mathcal{O}(\log(n))$ regret in the adversarial setting. The dependence of the constant of Theorem \ref{regretsemionline} on the exp-concavity constant might be reduced.

An interesting challenge is to adjust this regret bound for the EKF. In this paper we obtained weaker guarantees for the EKF, a $\mathcal{O}(\log(n))$ regret bound with prohibitive constants, in expectation and in the well-specified logistic regression. It would be interesting to obtain the result in the misspecified setting. Another improvement would be to find conditions for the convergence of $\sum_t \mathbb{P}(|(\hat{\theta}_t-\theta_{\text{true}})^TX_t| > \varepsilon)$, transforming our regret in expectation to a regret with high probability.

Also, as though we focused on logistic regression, it seems to us that our approach might be applied to any Generalized Linear Model.


\bibliography{mybib.bib}

\begin{thebibliography}{13}
\providecommand{\natexlab}[1]{#1}
\providecommand{\url}[1]{\texttt{#1}}
\expandafter\ifx\csname urlstyle\endcsname\relax
  \providecommand{\doi}[1]{doi: #1}\else
  \providecommand{\doi}{doi: \begingroup \urlstyle{rm}\Url}\fi

\bibitem[Bach and Moulines(2013)]{bach2013non}
F.~Bach and E.~Moulines.
\newblock Non-strongly-convex smooth stochastic approximation with convergence
  rate o(1/n).
\newblock In \emph{Advances in neural information processing systems}, pages
  773--781, 2013.

\bibitem[Bercu and Touati(2008)]{bercu2008exponential}
B.~Bercu and A.~Touati.
\newblock Exponential inequalities for self-normalized martingales with
  applications.
\newblock \emph{The Annals of Applied Probability}, 18\penalty0 (5):\penalty0
  1848--1869, 2008.

\bibitem[Cesa-Bianchi and Lugosi(2006)]{cesa2006prediction}
N.~Cesa-Bianchi and G.~Lugosi.
\newblock \emph{Prediction, Learning, and Games}.
\newblock Cambridge university press, 2006.

\bibitem[Diderrich(1985)]{diderrich1985kalman}
G.~T. Diderrich.
\newblock The {K}alman filter from the perspective of {G}oldberger--{T}heil
  estimators.
\newblock \emph{The American Statistician}, 39\penalty0 (3):\penalty0 193--198,
  1985.

\bibitem[Fahrmeir(1992)]{fahrmeir1992posterior}
L.~Fahrmeir.
\newblock Posterior mode estimation by extended {K}alman filtering for
  multivariate dynamic generalized linear models.
\newblock \emph{Journal of the American Statistical Association}, 87\penalty0
  (418):\penalty0 501--509, 1992.

\bibitem[Gadat and Panloup(2017)]{gadat2017optimal}
S.~Gadat and F.~Panloup.
\newblock Optimal non-asymptotic bound of the {R}uppert-{P}olyak averaging
  without strong convexity.
\newblock arXiv preprint arXiv:1709.03342, 2017.

\bibitem[Godichon-Baggioni(2018)]{godichon2018lp}
A.~Godichon-Baggioni.
\newblock Lp and almost sure rates of convergence of averaged stochastic
  gradient algorithms: locally strongly convex objective.
\newblock arxiv preprint arxiv:1609.05479, 2018.

\bibitem[Hazan(2016)]{hazan2016introduction}
E.~Hazan.
\newblock Introduction to online convex optimization.
\newblock \emph{Foundations and Trends{\textregistered} in Optimization},
  2\penalty0 (3-4):\penalty0 157--325, 2016.

\bibitem[Hazan et~al.(2007)Hazan, Agarwal, and Kale]{hazan2007logarithmic}
E.~Hazan, A.~Agarwal, and S.~Kale.
\newblock Logarithmic regret algorithms for online convex optimization.
\newblock \emph{Machine Learning}, 69\penalty0 (2-3):\penalty0 169--192, 2007.

\bibitem[Kalman and Bucy(1961)]{kalman1961new}
R.~E. Kalman and R.~S. Bucy.
\newblock New results in linear filtering and prediction theory.
\newblock \emph{Journal of basic engineering}, 83\penalty0 (1):\penalty0
  95--108, 1961.

\bibitem[Ollivier(2018)]{ollivier2018online}
Y.~Ollivier.
\newblock Online natural gradient as a {K}alman filter.
\newblock \emph{Electronic Journal of Statistics}, 12\penalty0 (2):\penalty0
  2930--2961, 2018.

\bibitem[Rigollet(2012)]{rigollet2012kullback}
P.~Rigollet.
\newblock Kullback--{L}eibler aggregation and misspecified generalized linear
  models.
\newblock \emph{The Annals of Statistics}, 40\penalty0 (2):\penalty0 639--665,
  2012.

\bibitem[Tropp(2011)]{tropp2011user}
J.~A. Tropp.
\newblock User-friendly tail bounds for matrix martingales.
\newblock Technical report, CALIFORNIA INST OF TECH PASADENA, 2011.

\end{thebibliography}

\appendix

\section{Details for the proof of Theorem \ref{regretsemionline}}\label{App1}
\begin{proof}[Proof of Lemma \ref{lemmaSincrement}.]
Let $S_t^{(i)}(\theta)=\frac{\partial}{\partial \theta_i} \left[\sum\limits_{s=1}^{t-1} l_s(y_s,\theta) +\frac12\theta^T \tilde{P}_1^{-1} \theta \right]$ be the $i^{th}$ coordinate of $S_t(\theta)$. We prove that
\begin{equation}\label{eq:marg}
    \left|S_{t+1}^{(i)}(\tilde{\theta}_{t+1}) - S_t^{(i)}(\tilde{\theta}_t) \right| \le \frac{D_X\left(1+e^D\right)}{4}\frac{X_t^T \tilde{P}_{t+1}X_t}{(1+e^{y_t\tilde{\theta}_t^TX_t})^2}.
\end{equation}
Applying a Taylor expansion, there exists  $0 < \alpha_t^{(i)} < 1$ satisfying
\begin{equation*}
    S_{t+1}^{(i)}(\tilde{\theta}_{t+1}) - S_{t+1}^{(i)}(\tilde{\theta}_t) - \left(\frac{\partial S_{t+1}^{(i)}(\theta)}{\partial \theta}\Bigr|_{\substack{\tilde{\theta}_t}}\right)^T(\tilde{\theta}_{t+1} - \tilde{\theta}_t) = \frac{1}{2} (\tilde{\theta}_{t+1} - \tilde{\theta}_t)^T \left(\frac{\partial^2 S_{t+1}^{(i)}(\theta)}{\partial \theta^2}\Bigr|_{\substack{\tilde{\theta}_t + \alpha_t^{(i)}(\tilde{\theta}_{t+1}-\tilde{\theta}_t)}}\right) (\tilde{\theta}_{t+1} - \tilde{\theta}_t).
\end{equation*}
Thanks to the update of $\tilde{\theta}_{t+1}$, we have the relation
\begin{equation*}
    S_{t+1}^{(i)}(\tilde{\theta}_t) - S_t^{(i)}(\tilde{\theta}_t) + \left( \frac{\partial S_{t+1}^{(i)}(\theta)}{\partial \theta}\Bigr|_{\substack{\tilde{\theta}_t}}\right)^T(\tilde{\theta}_{t+1} - \tilde{\theta}_t) = 0.
\end{equation*}
Therefore, summing last two identities, we get
\begin{equation*}
    S_{t+1}^{(i)}(\tilde{\theta}_{t+1}) - S_t^{(i)}(\tilde{\theta}_t)    = \frac{1}{2} (\tilde{\theta}_{t+1} - \tilde{\theta}_t)^T \left(\frac{\partial^2 S_{t+1}^{(i)}(\theta)}{\partial \theta^2}\Bigr|_{\substack{\tilde{\theta}_t + \alpha_t^{(i)}(\tilde{\theta}_{t+1}-\tilde{\theta}_t)}}\right) (\tilde{\theta}_{t+1} - \tilde{\theta}_t).
\end{equation*}
From the definition of the logistic loss, we identify
\begin{equation*}
    \frac{\partial^2 S_{t+1}^{(i)}(\theta)}{\partial \theta^2} = -\sum\limits_{s=1}^t \frac{X_s^{(i)}X_sX_s^T(e^{\theta^TX_s}-e^{-\theta^TX_s})}{(1+e^{\theta^TX_s})^2(1+e^{-\theta^TX_s})^2},\qquad \theta\in \R^d.
\end{equation*}
Using the bound $\frac{|e^{\theta^TX_s}-e^{-\theta^TX_s}|}{(1+e^{\theta^TX_s})(1+e^{-\theta^TX_s})} < 1$, we have
\begin{multline*}
    \left|\frac{1}{2} (\tilde{\theta}_{t+1} - \tilde{\theta}_t)^T \left(\frac{\partial^2 S_{t+1}^{(i)}(\theta)}{\partial \theta^2}\Bigr|_{\substack{\tilde{\theta}_t + \alpha(\tilde{\theta}_{t+1}-\tilde{\theta}_t)}}\right) (\tilde{\theta}_{t+1} - \tilde{\theta}_t) \right|
    \le  \frac{D_X}{2}(\tilde{\theta}_{t+1} - \tilde{\theta}_t)^T \\
    \left(\sum\limits_{s=1}^t \frac{X_sX_s^T}{(1+e^{(\tilde{\theta}_t + \alpha^{(i)}(\tilde{\theta}_{t+1}-\tilde{\theta}_t))^TX_s})(1+e^{-(\tilde{\theta}_t + \alpha^{(i)}(\tilde{\theta}_{t+1}-\tilde{\theta}_t))^TX_s})}\right)(\tilde{\theta}_{t+1} - \tilde{\theta}_t).
\end{multline*}
Noticing that
\begin{equation*}
    \frac{1}{(1+e^{(\tilde{\theta}_t + \alpha^{(i)}(\tilde{\theta}_{t+1}-\tilde{\theta}_t))^TX_s})(1+e^{-(\tilde{\theta}_t + \alpha^{(i)}(\tilde{\theta}_{t+1}-\tilde{\theta}_t))^TX_s})} < \frac{1+e^D}{2} \frac{1}{(1+e^{\tilde{\theta}_t^TX_s})(1+e^{-\tilde{\theta}_t^TX_s})},
\end{equation*}
we obtain
\begin{equation*}
    \left(\sum\limits_{s=1}^t \frac{X_sX_s^T}{(1+e^{(\tilde{\theta}_t + \alpha^{(i)}(\tilde{\theta}_{t+1}-\tilde{\theta}_t))^TX_s})(1+e^{-(\tilde{\theta}_t + \alpha^{(i)}(\tilde{\theta}_{t+1}-\tilde{\theta}_t))^TX_s})}\right) \prec \frac{1+e^D}{2} \tilde{P}_{t+1}^{-1}.
\end{equation*}
Combining our findings with the updates $\tilde{\theta}_{t+1} - \tilde{\theta}_t=\tilde{P}_{t+1}\frac{y_tX_t}{1+e^{y_t\tilde{\theta}_t^T X_t}}$ yields Eqn. \eqref{eq:marg}. The desired result follows easily as 
$$
 \left\|S_{t+1} (\tilde{\theta}_{t+1}) - S_t (\tilde{\theta}_t) \right\|\le \sqrt d\max_{1\le i\le d} \left|S_{t+1}^{(i)}(\tilde{\theta}_{t+1}) - S_t^{(i)}(\tilde{\theta}_t) \right|\,.
 $$
\end{proof}

\begin{proof}[Proof of Proposition \ref{cesabianchimodified}.]
For any $1\le s\le t\le n$ we have $\frac{1}{(1+e^{\tilde{\theta}_t^TX_s})(1+e^{-\tilde{\theta}_t^TX_s})} > \frac{1}{2(1+e^D)}$  and 
\begin{equation*}
    \frac{X_sX_s^T}{(1+e^{\tilde{\theta}_t^TX_s})(1+e^{-\tilde{\theta}_t^TX_s})} \succ \frac{X_sX_s^T}{2(1+e^D)}.
\end{equation*}
Summing this inequality  along with $1>\frac{1}{2(1+e^D)}$ and $\tilde{P}_1^{-1}\succ 0$ yields
\begin{equation*}
    \tilde{P}_{t+1}^{-1}= \tilde{P}_1^{-1} + \sum\limits_{s=1}^t \frac{X_sX_s^T}{(1+e^{\tilde{\theta}_t^TX_s})(1+e^{-\tilde{\theta}_t^TX_s})} \succ \frac{1}{2(1+e^D)}\left(\tilde{P}_1^{-1}+\sum\limits_{s=1}^t X_sX_s^T\right).
\end{equation*}
Using that if $A$ and $B$ are positive definite matrices, $A\succ B \implies A^{-1}\prec B^{-1}$, we get
\begin{equation*}
    \tilde{P}_{t+1} \prec 2(1+e^D)\left(\tilde{P}_1^{-1}+\sum\limits_{s=1}^t X_sX_s^T\right)^{-1}.
\end{equation*}
We then apply Lemma 11.11 of \cite{cesa2006prediction}   to get
\begin{align*}
    X_t^T\left(\tilde{P}_1^{-1}+\sum\limits_{s=1}^t X_sX_s^T\right)^{-1}X_t &= 1 - \frac{\det\left(\tilde{P}_1^{-1}+\sum\limits_{s=1}^{t-1} X_sX_s^T\right)}{\det\left(\tilde{P}_1^{-1}+\sum\limits_{s=1}^t X_sX_s^T\right)} \\
    &\le \log\left(\frac{\det\left(\tilde{P}_1^{-1}+\sum\limits_{s=1}^t X_sX_s^T\right)}{\det\left(\tilde{P}_1^{-1}+\sum\limits_{s=1}^{t-1} X_sX_s^T\right)}\right),
\end{align*}
thanks to the inequality $1-x\le \log(1/x)$ for any $x>0$. As $\frac{1}{(1+e^{\tilde{\theta}_t^TX_t})(1+e^{-\tilde{\theta}_t^TX_t})} \le \frac{1}{4}$, we get
\begin{equation*}
    \frac{X_t^T\tilde{P}_{t+1}X_t}{(1+e^{\tilde{\theta}_t^TX_t})(1+e^{-\tilde{\theta}_t^TX_t})} \le \frac{1+e^D}{2} \log\left(\frac{\det\left(\tilde{P}_1^{-1}+\sum\limits_{s=1}^t X_sX_s^T\right)}{\det\left(\tilde{P}_1^{-1}+\sum\limits_{s=1}^{t-1} X_sX_s^T\right)}\right).
\end{equation*}
Summing from $1$ to $n-1$, we obtain
\begin{align*}
    \sum\limits_{t=1}^{n-1}\frac{X_t^T\tilde{P}_{t+1}X_t}{(1+e^{\tilde{\theta}_t^TX_t})(1+e^{-\tilde{\theta}_t^TX_t})} & \le \frac{1+e^D}{2} \log\left(\frac{\det\left(\tilde{P}_1^{-1}+\sum\limits_{s=1}^{n-1} X_sX_s^T\right)}{\det(\tilde{P}_1^{-1})}\right) \\
    & = \frac{1+e^D}{2} \log\left(\det\left(I+p_1\sum\limits_{s=1}^{n-1} X_sX_s^T\right)\right),
\end{align*}
because $\tilde{P}_1=p_1 I$. The biggest eigenvalue of $\left(I+p_1\sum\limits_{s=1}^{n-1} X_sX_s^T\right)$ is bounded by $1+p_1\sum\limits_{s=1}^{n-1} X_s^TX_s$. We get
\begin{equation*}
    \det\left(I+p_1\sum\limits_{s=1}^{n-1} X_sX_s^T\right) \le \left(1+p_1\sum\limits_{s=1}^{n-1} X_s^TX_s\right)^d \le \left(1+p_1(n-1)D_X^2\right)^d.
\end{equation*}
and that concludes the proof.
\end{proof}

\section{Details for the proof of Theorem \ref{resultKalman}}\label{App2}
\begin{proof}[Proof of Proposition \ref{propertyexpectedbound}.]
We develop the expectation
\begin{equation*}
    \mathbb{E}_{y\sim p(y|X,\theta_{\text{true}})}\left[\frac{yX^T(\theta_{\text{true}}-\theta)}{1+e^{y\theta^TX}}\right] = \frac{(\theta_{\text{true}}-\theta)^TX}{(1+e^{\theta^TX})(1+e^{-\theta^TX})} \left[\frac{1+e^{-\theta^TX}}{1+e^{-\theta_{\text{true}}^TX}} - \frac{1+e^{\theta^TX}}{1+e^{\theta_{\text{true}}^TX}}\right].
\end{equation*}
Therefore, we bound $\left[\frac{1+e^{-\theta^TX}}{1+e^{-\theta_{\text{true}}^TX}} - \frac{1+e^{\theta^TX}}{1+e^{\theta_{\text{true}}^TX}}\right]$ in terms of $X^T(\theta_{\text{true}}-\theta)$. We first rearrange it:
\begin{align*}
    \frac{1+e^{-\theta^TX}}{1+e^{-\theta_{\text{true}}^TX}} - \frac{1+e^{\theta^TX}}{1+e^{\theta_{\text{true}}^TX}} & = \left(1+\frac{e^{-\theta^TX}-e^{-\theta_{\text{true}}^TX}}{1+e^{-\theta_{\text{true}}^TX}}\right) - \left(1+\frac{e^{\theta^TX}-e^{\theta_{\text{true}}^TX}}{1+e^{\theta_{\text{true}}^TX}}\right) \\
    & = \frac{e^{(\theta_{\text{true}}-\theta)^TX}-1}{1+e^{\theta_{\text{true}}^TX}} - \frac{e^{-(\theta_{\text{true}}-\theta)^TX}-1}{1+e^{-\theta_{\text{true}}^TX}}.
\end{align*}
There exists $|\alpha|\le |(\theta_{\text{true}}-\theta)^TX|$ such that
\begin{equation*}
    \frac{e^{(\theta_{\text{true}}-\theta)^TX}-1}{1+e^{\theta_{\text{true}}^TX}} - \frac{e^{-(\theta_{\text{true}}-\theta)^TX}-1}{1+e^{-\theta_{\text{true}}^TX}} = \left[\frac{e^{\alpha}}{1+e^{\theta_{\text{true}}^TX}} + \frac{e^{-\alpha}}{1+e^{-\theta_{\text{true}}^TX}}\right] X^T(\theta_{\text{true}}-\theta).
\end{equation*}
As the function $x\rightarrow \frac{a}{1+x} + \frac{b}{1+1/x}$ is monotonic with limits $a$ and $b$ in $0$ and $+\infty$ respectively, we get 
$$
e^{-(\theta_{\text{true}}-\theta)^TX} < \frac{e^{\alpha}}{1+e^{\theta_{\text{true}}^TX}} + \frac{e^{-\alpha}}{1+e^{-\theta_{\text{true}}^TX}} < e^{(\theta_{\text{true}}-\theta)^TX}.
$$
\end{proof}

\begin{proof}[Proof of Proposition \ref{prop:iid}.]
We define $S_{t+1}=\left(I+\sum\limits_{s=1}^t X_sX_s^T\right)^{-1}$. As $S_t\prec P_t\prec 2(1+e^D)S_t$, we give the desired results first on $S_t$ and the desired results on $P_t$ follow easily.

We first give a lower bound on $\mathbb{E}[S_{t+1}X_tX_t^T\mid X_1,y_1,\ldots,X_{t-1},y_{t-1}]$. Using the relation $S_{t+1}=S_t-\frac{S_tX_tX_t^TS_t}{1+X_t^TS_tX_t}$, we write
\begin{equation*}
    S_{t+1}X_tX_t^T = S_tX_tX_t^T-\frac{S_tX_tX_t^TS_t}{1+X_t^TS_tX_t}X_tX_t^T = \frac{1}{1+X_t^TS_tX_t}S_tX_tX_t^T.
\end{equation*}
Noting that $S_t\succcurlyeq I$, $\|X_t\|\le D_X$, and $S_tX_tX_t^T\succcurlyeq 0$ as a rank-one matrix with eigenvalue $X_t^TS_tX_t>0$, we get $S_{t+1}X_tX_t^T\succcurlyeq \frac{S_tX_tX_t^T}{1+D_X^2}$. It implies
\begin{align*}
    \mathbb{E}[S_{t+1}X_tX_t^T\mid X_1,y_1,\ldots,X_{t-1},y_{t-1}] & \succcurlyeq \frac{1}{1+D_X^2} \mathbb{E}[S_tX_tX_t^T\mid X_1,y_1,\ldots,X_{t-1},y_{t-1}]\\
    & = \frac{1}{1+D_X^2} S_t \mathbb{E}[X_tX_t^T\mid X_1,y_1,\ldots,X_{t-1},y_{t-1}]\,.
\end{align*}
The independence hypothesis yields $\mathbb{E}[X_tX_t^T\mid X_1,y_1,\ldots,X_{t-1},y_{t-1}]=\mathbb{E}[XX^T]$. Also, from $\lambda_{\rm max}(S_t^{-1})\le 1+(t-1)D_X^2$ we obtain $\lambda_{\rm min}(S_t)\ge \frac{1}{1+(t-1)D_X^2}\ge \frac{1}{t(1+D_X^2)}$. Therefore
\begin{equation*}
    \lambda_{\rm min}(\mathbb{E}[S_{t+1}X_tX_t^T\mid X_1,y_1,\ldots,X_{t-1},y_{t-1}]) \ge \frac{1}{1+D_X^2} \lambda_{\rm min}(S_t) \lambda_{\rm min}(\mathbb{E}[XX^T])
    \ge \frac{\lambda_{\rm min}}{t(1+D_X^2)^2}\,.
\end{equation*}
In order to get an upper bound on $\lambda_{\rm max}(\mathbb{E}[P_{t+1}^2])$, we first bound $\mathbb{P}\left(\lambda_{\rm max}(S_t)>\frac{2}{t\lambda_{\rm min}}\right)$. Then we estimate $\lambda_{\rm max}(\mathbb{E}[P_{t+1}^2])\le \mathbb{E}[\lambda_{\rm max}(P_{t+1}^2)]\le 4(1+e^D)^2\mathbb{E}[\lambda_{\rm max}(S_t)^2]$ with
\begin{align*}
    \mathbb{E}[\lambda_{\rm max}(S_t)^2] & = \mathbb{E}[\lambda_{\rm max}(S_t)^2\mathds{1}_{\lambda_{\rm max}(S_t)>\frac{2}{t\lambda_{\rm min}}}] + \mathbb{E}[\lambda_{\rm max}(S_t)^2\mathds{1}_{\lambda_{\rm max}(S_t)\le\frac{2}{t\lambda_{\rm min}}}] \\
    & \le \mathbb{P}\left(\lambda_{\rm max}(S_t)>\frac{2}{t\lambda_{\rm min}}\right) + \frac{4}{t^2\lambda_{\rm min}^2}\,,
\end{align*}
because $\lambda_{\rm max}(S_t)\le 1$ and $\mathbb{P}\left(\lambda_{\rm max}(S_t)\le\frac{2}{t\lambda_{\rm min}}\right)\le1$.

We control the deviations of $\lambda_{\rm max}(S_t)$ first by centering as
\begin{align*}
    \mathbb{P}\left(\lambda_{\rm max}(S_t)>\frac{2}{t\lambda_{\rm min}}\right) & = \mathbb{P}\left(\lambda_{\rm min}(S_t^{-1})<\frac{\lambda_{\rm min}}{2} t\right) \\
    & = \mathbb{P}\left(\lambda_{\rm min}\Big(\sum\limits_{s=1}^tX_sX_s^T\Big)<\frac{\lambda_{\rm min}}{2} t-1\right) \\
    & \le \mathbb{P}\left(\lambda_{\rm min}\Big(\sum\limits_{s=1}^tX_sX_s^T\Big)<\frac{\lambda_{\rm min}}{2} t\right) \\
    & = \mathbb{P}\left(\lambda_{\rm min}\Big(\sum\limits_{s=1}^tX_sX_s^T\Big)-t\lambda_{\rm min}(\mathbb{E}[XX^T])<-\frac{\lambda_{\rm min}}{2} t\right)\,.
\end{align*}
Then we want to show that  $X_sX_s^T$ and $\mathbb{E}[XX^T]$ are commuting in order to rewrite the centered smallest eigenvalue as the smallest eigenvalue of a centered matrix and apply the Bernstein inequality of \cite{tropp2011user} on it. We note that 
$X_sX_s^T\mathbb{E}[XX^T]$ is a rank-one matrix with eigenvalue $X_s^T\mathbb{E}[XX^T]X_s>0$ if $X_s\neq 0$. Therefore $X_sX_s^T\mathbb{E}[XX^T]\succcurlyeq 0$ is symmetric and $X_sX_s^T \mathbb{E}[XX^T]= (X_sX_s^T \mathbb{E}[XX^T])^T= \mathbb{E}[XX^T]X_sX_s^T$. Similarly, we get that $\sum_1^tX_sX_s^T$ and $\mathbb{E}[XX^T]$ are commuting, thus they are simultaneously diagonalizable. Using their joint diagonalization, we infer that
$$
\lambda_{\rm min}\left(\sum\limits_{s=1}^t(X_sX_s^T-\mathbb{E}[XX^T])\right)\le \lambda_{\rm min}\left(\sum\limits_{s=1}^tX_sX_s^T\right)-\lambda_{\rm min}\left(\sum\limits_{s=1}^t\mathbb{E}[XX^T]\right).
$$
Combining those results, we obtain that
\begin{align*}
    \mathbb{P}\left(\lambda_{\rm max}(S_t)>\frac{2}{t\lambda_{\rm min}}\right) & \le \mathbb{P}\left(\lambda_{\rm min}\left(\sum\limits_{s=1}^t(X_sX_s^T-\mathbb{E}[XX^T])\right)<-\frac{\lambda_{\rm min}}{2} t\right) \\
    & = \mathbb{P}\left(\lambda_{\rm max}\left(\sum\limits_{s=1}^t(\mathbb{E}[XX^T]-X_sX_s^T)\right)>\frac{\lambda_{\rm min}}{2} t\right).
\end{align*}
We apply Theorem 1.3 of \cite{tropp2011user} which is a Bernstein inequality on the largest eigenvalue of sums of independent centered matrices. We check the conditions:
\begin{itemize}
    \item $\mathbb{E}\left[\mathbb{E}[XX^T]-X_sX_s^T\right]=0$,
    \item $\lambda_{\rm max}(\mathbb{E}[XX^T]-X_sX_s^T)\le D_X^2$ a.s.,
    \item from $\|X_s\|\le D_X^2$ as $0\preccurlyeq \mathbb{E}\left[(\mathbb{E}[XX^T]-X_sX_s^T)^2\right] \preccurlyeq \mathbb{E}\left[(X_sX_s^T)^2\right]$, the largest singular value of $\mathbb{E}\left[(\mathbb{E}[XX^T]-X_sX_s^T)^2\right]$ is upper-bounded by $D_X^4$.
\end{itemize}
Therefore we obtain
$$
\mathbb{P}\left(\lambda_{\rm max}\left(\sum\limits_{s=1}^t(\mathbb{E}[XX^T]-X_sX_s^T)\right)>u \right)\le d\exp\left(-\frac{u^2/2}{tD_X^4+D_X^2u/3}\right),\qquad u>0.
$$
Applying it with $u=\frac{\lambda_{\rm min}}{2}t$, we get
\begin{align*}
    \mathbb{P}\left(\lambda_{\rm max}(S_t)>\frac{2}{t\lambda_{\rm min}}\right) & \le d \exp\left(-\frac{(\lambda_{\rm min}/2)^2t^2/2}{tD_X^4+D_X^2(\lambda_{\rm min}/2)t/3}\right)\\
    & = d \exp\left(-t\frac{\lambda_{\rm min}^2/8}{D_X^4+D_X^2\lambda_{\rm min}/6}\right)\\
    & \le \frac{1}{t^3}\frac{27de^{-3}\left(D_X^4+D_X^2\lambda_{\rm min}/6\right)^3}{(\lambda_{\rm min}^2/8)^3},
\end{align*}
because $\max\limits_{x\in\R}(e^{-ax}x^3)=\frac{27e^{-3}}{a^3}$ for any $a>0$. Therefore the desired result is obtained as
\begin{equation*}
    \mathbb{E}[\lambda_{\rm max}(S_t)^2] \le \frac{4}{\lambda_{\rm min}^2 t^2}\left(1+\frac{1}{t}\frac{2de^{-3}\left(3D_X^4+D_X^2\lambda_{\rm min}/2\right)^3}{(\lambda_{\rm min}^2/8)^2}\right)\,.
\end{equation*}
\end{proof}

\begin{proof}[Proof of Corollary \ref{coroproperty}.]
Denoting $E_t=\mathbb{E}_t\left[\frac{y_tX_t^T}{1+e^{y_t\hat{\theta}_t^TX_t}}\right](\theta_{\text{true}}-\hat{\theta}_t)$ and $Q_t=\frac{(\theta_{\text{true}}-\hat{\theta}_t)^TX_tX_t^T(\theta_{\text{true}}-\hat{\theta}_t)}{(1+e^{\hat{\theta}_t^TX_t})(1+e^{-\hat{\theta}_t^TX_t})}$, we have the sandwich relationship $e^{-\varepsilon}Q_t < E_t < e^{\varepsilon}Q_t$ according to Proposition \ref{propertyexpectedbound}. Therefore we obtain
\begin{equation*}
    E_t-cQ_t > (e^{-\varepsilon}-c)Q_t > \frac{e^{-\varepsilon}-c}{e^{\varepsilon}}E_t,
\end{equation*}
and the Corollary follows.
\end{proof}

\begin{proof}[Proof of Lemma \ref{deterministicbound}.]
We start from the Kalman recursion
\begin{align}
    P_{t+1}^{-1} &= P_t^{-1} + \frac{1}{(1+e^{\hat{\theta}_t^TX_t})(1+e^{-\hat{\theta}_t^TX_t})}X_tX_t^T,\nonumber\\
    \label{updatethetalemma}
    \hat{\theta}_{t+1} &= \hat{\theta}_t + P_{t+1}\frac{y_tX_t}{1+e^{y_t\hat{\theta}_t^TX_t}}.
\end{align}
Multiplying Equation \eqref{updatethetalemma} by $P_{t+1}^{-1}$ and \eqref{updatethetalemma} and subtracting $\theta_{\text{true}}$, we obtain
\begin{equation*}
    (\hat{\theta}_{t+1}-\theta_{\text{true}})^TP_{t+1}^{-1}(\hat{\theta}_{t+1}-\theta_{\text{true}}) = (\hat{\theta}_t-\theta_{\text{true}})^TP_{t+1}^{-1}(\hat{\theta}_t-\theta_{\text{true}}) + \frac{X_t^TP_{t+1}X_t}{(1+e^{y_t\hat{\theta}_t^TX_t})^2} + 2\frac{y_tX_t^T}{1+e^{y_t\hat{\theta}_t^TX_t}}(\hat{\theta}_t-\theta_{\text{true}}),
\end{equation*}
yielding the following equality:
\begin{multline*}
    \sum\limits_{t\in T_{\varepsilon}} \left((\frac{y_tX_t}{1+e^{y_t\hat{\theta}_t^TX_t}})^T(\theta_{\text{true}}-\hat{\theta}_t) - \frac{1}{2}(\hat{\theta}_t-\theta_{\text{true}})^T(P_{t+1}^{-1}-P_t^{-1})(\hat{\theta}_t-\theta_{\text{true}})\right) \\
    = \frac{1}{2} \sum\limits_{t\in T_{\varepsilon}} \frac{X_t^TP_{t+1}X_t}{(1+e^{y_t\hat{\theta}_t^TX_t})^2} + \frac{1}{2} \sum\limits_{t\in T_{\varepsilon}}
    \left((\hat{\theta}_t-\theta_{\text{true}})^TP_t^{-1}(\hat{\theta}_t-\theta_{\text{true}}) - (\hat{\theta}_{t+1}-\theta_{\text{true}})^TP_{t+1}^{-1}(\hat{\theta}_{t+1}-\theta_{\text{true}})\right)\\
    = \frac{1}{2} \sum\limits_{t=1}^n \frac{X_t^TP_{t+1}X_t}{(1+e^{y_t\hat{\theta}_t^TX_t})^2} + \frac{1}{2} \sum\limits_{t=1}^n
    \left((\hat{\theta}_t-\theta_{\text{true}})^TP_t^{-1}(\hat{\theta}_t-\theta_{\text{true}}) - (\hat{\theta}_{t+1}-\theta_{\text{true}})^TP_{t+1}^{-1}(\hat{\theta}_{t+1}-\theta_{\text{true}})\right) \\
    \qquad- \frac{1}{2} \sum\limits_{t\notin T_{\varepsilon}} \frac{X_t^TP_{t+1}X_t}{(1+e^{y_t\hat{\theta}_t^TX_t})^2} - \frac{1}{2} \sum\limits_{t\notin T_{\varepsilon}}
    \left((\hat{\theta}_t-\theta_{\text{true}})^TP_t^{-1}(\hat{\theta}_t-\theta_{\text{true}}) - (\hat{\theta}_{t+1}-\theta_{\text{true}})^TP_{t+1}^{-1}(\hat{\theta}_{t+1}-\theta_{\text{true}})\right).
\end{multline*}
As $\sum\limits_{t\notin T_{\varepsilon}} \frac{X_t^TP_{t+1}X_t}{(1+e^{y_t\hat{\theta}_t^TX_t})^2} \ge 0$, we obtain
\begin{multline*}
    \sum\limits_{t\in T_{\varepsilon}} \left((\frac{y_tX_t}{1+e^{y_t\hat{\theta}_t^TX_t}})^T(\theta_{\text{true}}-\hat{\theta}_t) - \frac{1}{2}(\theta_{\text{true}}-\hat{\theta}_t)^T(P_{t+1}^{-1}-P_t^{-1})(\theta_{\text{true}}-\hat{\theta}_t)\right) \\
    \le \frac{1}{2} \sum\limits_{t=1}^n \frac{X_t^TP_{t+1}X_t}{(1+e^{y_t\hat{\theta}_t^TX_t})^2} + \frac{1}{2} \sum\limits_{t=1}^n
    \left((\hat{\theta}_t-\theta_{\text{true}})^TP_t^{-1}(\hat{\theta}_t-\theta_{\text{true}}) - (\hat{\theta}_{t+1}-\theta_{\text{true}})^TP_{t+1}^{-1}(\hat{\theta}_{t+1}-\theta_{\text{true}})\right) \\
    - \frac{1}{2} \sum\limits_{t\notin T_{\varepsilon}}
    \left((\hat{\theta}_t-\theta_{\text{true}})^TP_t^{-1}(\hat{\theta}_t-\theta_{\text{true}}) - (\hat{\theta}_{t+1}-\theta_{\text{true}})^TP_{t+1}^{-1}(\hat{\theta}_{t+1}-\theta_{\text{true}})\right).
\end{multline*}
Using similar arguments than in the proof of Proposition \ref{cesabianchimodified}, we obtain
$$
    \sum\limits_{t=1}^{n} \frac{X_t^TP_{t+1}X_t}{(1+e^{y_t\hat{\theta}_t^TX_t})^2} \le \frac{(1+e^D)}2d \log\left(1+n p_1D_X^2 \right).
$$
The telescopic sum yields the desired result
\begin{equation*}
    \sum\limits_{t=1}^n
    \left((\hat{\theta}_t-\theta_{\text{true}})^TP_t^{-1}(\hat{\theta}_t-\theta_{\text{true}}) - (\hat{\theta}_{t+1}-\theta_{\text{true}})^TP_{t+1}^{-1}(\hat{\theta}_{t+1}-\theta_{\text{true}})\right) \le \frac{1}{p_1}\|\theta_{\text{true}}\|^2.
\end{equation*}
\end{proof}

\begin{proof}[Proof of Lemma \ref{lemmamartingale}.]
We apply Lemma B.1 of \cite{bercu2008exponential}) on the martingale difference $\Delta M_t\mathds{1}_{| X_t^T(\hat\theta_t-\theta_{\text{true}})|<\varepsilon }$ (as $X_t^T(\hat\theta_t-\theta_{\text{true}})$ is adapted to the filtration $\sigma(X_1,y_1,\ldots,X_{t-1},y_{t-1},X_t)$) in order to obtain
\begin{equation*}
    \mathbb{E}\left[\exp\left(\sum\limits_{t\in T_{\varepsilon}} \left(\lambda \Delta M_t - \frac{\lambda^2}2((\Delta M_t)^2+\mathbb{E}_t[(\Delta M_t)^2])\right)\right) \right]\le 1,\qquad \lambda>0\,.
\end{equation*}
We will prove that 
\begin{equation}\label{eq:quadra}
    (\Delta M_t)^2+\mathbb{E}_t[(\Delta M_t)^2]) \le 2(1+e^D) \frac{(\theta_{\text{true}}-\hat{\theta}_t)^TX_tX_t^T(\theta_{\text{true}}-\hat{\theta}_t)}{(1+e^{\hat{\theta}_t^TX_t})(1+e^{-\hat{\theta}_t^TX_t})}
\end{equation}
in order to achieve
\begin{multline}
    \label{boundmartingale}
    \mathbb{E}\left[\exp\left(\sum\limits_{t\in T_{\varepsilon}} \left(\lambda \Delta M_t - \lambda^2(1+e^D)\frac{(\theta_{\text{true}}-\hat{\theta}_t)^TX_tX_t^T(\theta_{\text{true}}-\hat{\theta}_t)}{(1+e^{\hat{\theta}_t^TX_t})(1+e^{-\hat{\theta}_t^TX_t})}\right)\right)\right] \\
    \le \mathbb{E}\left[\exp\left(\sum\limits_{t\in T_{\varepsilon}} \left(\lambda \Delta M_t - \frac{\lambda^2}2((\Delta M_t)^2+\mathbb{E}_t[(\Delta M_t)^2])\right)\right)\right] \le 1.
\end{multline}
We obtain the inequality \eqref{eq:quadra} by first developing the quadratic term $\mathbb{E}_t[(\Delta M_t)^2])$   as
\begin{align*}
    \mathbb{E}_t[(\Delta M_t)^2]) &= (\theta_{\text{true}}-\hat{\theta}_t)^T\frac{X_tX_t^T}{(1+e^{\hat{\theta}_t^TX_t})(1+e^{-\hat{\theta}_t^TX_t})}(\theta_{\text{true}}-\hat{\theta}_t)\mathbb{E}_t\left[\frac{1+e^{-y_t\hat{\theta}_t^TX_t}}{1+e^{y_t\hat{\theta}_t^TX_t}}\right]\,,\\
    \mathbb{E}_t\left[\frac{1+e^{-y_t\hat{\theta}_t^TX_t}}{1+e^{y_t\hat{\theta}_t^TX_t}}\right] & =  \frac{1+e^{-\hat{\theta}_t^TX_t}}{(1+e^{\hat{\theta}_t^TX_t})(1+e^{-\theta_{\text{true}}^TX_t})} + \frac{1+e^{\hat{\theta}_t^TX_t}}{(1+e^{-\hat{\theta}_t^TX_t})(1+e^{\theta_{\text{true}}^TX_t})}  \\
    & = \frac{a}{1+x} + \frac{a^{-1}}{1+x^{-1}},
\end{align*}
with $a=\frac{1+e^{-\hat{\theta}_t^TX_t}}{1+e^{\hat{\theta}_t^TX_t}}$ and $x=e^{-\theta_{\text{true}}^TX_t}$. As the function  $x\rightarrow \frac{a}{1+x} + \frac{a^{-1}}{1+1/x}$ is monotonic with limits $a$ and $a^{-1}$ in $0$ and $+\infty$, we get $\mathbb{E}_t\left[\frac{1+e^{-y_t\hat{\theta}_t^TX_t}}{1+e^{y_t\hat{\theta}_t^TX_t}}\right] \le \frac{1+e^{D}}{1+e^{-D}}<1+e^{D}$.
To conclude to the inequality \eqref{eq:quadra} we write
\begin{equation*}
    (\Delta M_t)^2 = \frac{(\theta_{\text{true}}-\hat{\theta}_t)^TX_tX_t^T(\theta_{\text{true}}-\hat{\theta}_t)}{(1+e^{\hat{\theta}_t^TX_t})(1+e^{-\hat{\theta}_t^TX_t})} \frac{1+e^{y_t\hat{\theta}_t^TX_t}}{1+e^{-y_t\hat{\theta}_t^TX_t}},
\end{equation*}
and we notice that $\frac{1+e^{y_t\hat{\theta}_t^TX_t}}{1+e^{-y_t\hat{\theta}_t^TX_t}}\le \frac{1+e^{D}}{1+e^{-D}}<1+e^{D}$. Therefore the inequality \eqref{eq:quadra} is proved.\\
Using \eqref{boundmartingale} and the Chernoff's bound, we get for any $\alpha,\gamma >0$ and $\lambda=\alpha/(1+e^D)$,
\begin{align*}
    \mathbb{P}&\left(\sum\limits_{t\in T_{\varepsilon}} \left(\Delta M_t - \alpha\frac{(\theta_{\text{true}}-\hat{\theta}_t)^TX_tX_t^T(\theta_{\text{true}}-\hat{\theta}_t)}{(1+e^{\hat{\theta}_t^TX_t})(1+e^{-\hat{\theta}_t^TX_t})}\right) > \gamma  \right)\\
    &\le \exp(-\lambda \gamma  )\mathbb{E}\left[\exp\left(\sum\limits_{t\in T_{\varepsilon}} \left(\lambda \Delta M_t - \lambda^2(1+e^D)\frac{(\theta_{\text{true}}-\hat{\theta}_t)^TX_tX_t^T(\theta_{\text{true}}-\hat{\theta}_t)}{(1+e^{\hat{\theta}_t^TX_t})(1+e^{-\hat{\theta}_t^TX_t})}\right)\right) \right]\\
    &\le e^{-\frac{\alpha\gamma}{1+e^D}}.
\end{align*}
Setting $\gamma=\frac{1+e^D}{\alpha}\log(\delta^{-1})$ for any $\delta>0$ yields the result.
\end{proof}

\begin{proof}[Proof of Lemma \ref{lemmarecursion}.]
We use power functions of $\|\hat{\theta}_{t+1}-\theta_{\text{true}}\|^2$ identified as
\begin{equation*}
    (\hat{\theta}_{t+1}-\theta_{\text{true}})^T(\hat{\theta}_{t+1}-\theta_{\text{true}}) = (\hat{\theta}_t-\theta_{\text{true}})^T(\hat{\theta}_t-\theta_{\text{true}}) + 2 \frac{(\hat{\theta}_t-\theta_{\text{true}})^TP_{t+1}y_tX_t}{1+e^{y_t\hat{\theta}_t^TX_t}} + \frac{X_t^TP_{t+1}^2X_t}{(1+e^{y_t\hat{\theta}_t^TX_t})^2}.
\end{equation*}
Developing the power function of order $k$, we obtain
\begin{multline*}
    \|\hat{\theta}_{t+1}-\theta_{\text{true}}\|^{2k} = \left(\|\hat{\theta}_t-\theta_{\text{true}}\|^{2} + 2\frac{(\hat{\theta}_t-\theta_{\text{true}})^TP_{t+1}y_tX_t}{1+e^{y_t\hat{\theta}_t^TX_t}}\right)^k\\
    + \frac{X_t^TP_{t+1}^2X_t}{(1+e^{y_t\hat{\theta}_t^TX_t})^2} \sum\limits_{i=1}^k \binom{k}{i} \left(\|\hat{\theta}_t-\theta_{\text{true}}\|^{2} + 2\frac{(\hat{\theta}_t-\theta_{\text{true}})^TP_{t+1}y_tX_t}{1+e^{y_t\hat{\theta}_t^TX_t}}\right)^{k-i}\left(\frac{X_t^TP_{t+1}^2X_t}{(1+e^{y_t\hat{\theta}_t^TX_t})^2}\right)^{i-1}.
\end{multline*}
By definition we note that $P_{t+1}\preccurlyeq P_1 = p_1 I$ so that
\begin{equation*}
    \left| \|\hat{\theta}_t-\theta_{\text{true}}\|^{2} + 2\frac{(\hat{\theta}_t-\theta_{\text{true}})^TP_{t+1}y_tX_t}{1+e^{y_t\hat{\theta}_t^TX_t}} \right| \le \|\hat{\theta}_t-\theta_{\text{true}}\|^{2} + 2p_1\|\hat{\theta}_t-\theta_{\text{true}}\|\|X_t\| \le 4D_{\theta}^2 + 2p_1D_{\theta}D_X,
\end{equation*}
and
\begin{equation*}
    \frac{X_t^TP_{t+1}^2X_t}{(1+e^{y_t\hat{\theta}_t^TX_t})^2} \le p_1^2 D_X^2.
\end{equation*}
Assumption \ref{ass2} gives the rate
\begin{equation*}
    \mathbb{E}\left[\frac{X_t^TP_{t+1}^2X_t}{(1+e^{y_t\hat{\theta}_t^TX_t})^2} \right] \le  \frac{M_2}{4 t^{2}}.
\end{equation*}
Summing those bounds in the binomial expansion of order $k$, we get
\begin{multline*}
    \mathbb{E}\left[\frac{X_t^TP_{t+1}^2X_t}{(1+e^{y_t\hat{\theta}_t^TX_t})^2} \sum\limits_{i=1}^k \binom{k}{i} \left(\|\hat{\theta}_t-\theta_{\text{true}}\|^{2} + 2\frac{(\hat{\theta}_t-\theta_{\text{true}})^TP_{t+1}y_tX_t}{1+e^{y_t\hat{\theta}_t^TX_t}}\right)^{k-i} \left(\frac{X_t^TP_{t+1}^2X_t}{(1+e^{y_t\hat{\theta}_t^TX_t})^2}\right)^{i-1}\right] \\
    \le \frac{M_2}{4t^{2}}\frac{1}{p_1^2D_X^2} \sum\limits_{i=1}^k \binom{k}{i} \left(4 D_{\theta}^2 + 2p_1D_{\theta}D_X\right)^{k-i}\left(p_1^2D_X^2\right)^i \le \frac{b_{k,1}}{t^{2}},
\end{multline*}
with $b_{k,1}= \frac{M_2}{4 p_1^2D_X^2}\left(4D_{\theta}^2 + 2p_1D_{\theta}D_X + p_1^2D_X^2\right)^k$. Similarly, we use again the binomial expansion of order $k$ in order to obtain
\begin{multline*}
    \left(\|\hat{\theta}_t-\theta_{\text{true}}\|^{2} + 2\frac{(\hat{\theta}_t-\theta_{\text{true}})^TP_{t+1}y_tX_t}{1+e^{y_t\hat{\theta}_t^TX_t}}\right)^k = \|\hat{\theta}_t-\theta_{\text{true}}\|^{2k} + 2k\, \|\hat{\theta}_t-\theta_{\text{true}}\|^{2(k-1)}  \frac{(\hat{\theta}_t-\theta_{\text{true}})^TP_{t+1}y_tX_t}{1+e^{y_t\hat{\theta}_t^TX_t}} \\
    + \left(2\frac{(\hat{\theta}_t-\theta_{\text{true}})^TP_{t+1}y_tX_t}{1+e^{y_t\hat{\theta}_t^TX_t}}\right)^2 \sum\limits_{i=2}^k \binom{k}{i} \|\hat{\theta}_t-\theta_{\text{true}}\|^{2(k-i)}  \left(2\frac{(\hat{\theta}_t-\theta_{\text{true}})^TP_{t+1}y_tX_t}{1+e^{y_t\hat{\theta}_t^TX_t}}\right)^{i-2}.
\end{multline*}
We use again the elementary bound
\begin{equation*}
    \left| 2\frac{(\hat{\theta}_t-\theta_{\text{true}})^TP_{t+1}y_tX_t}{1+e^{y_t\hat{\theta}_t^TX_t}} \right| \le 2p_1D_{\theta}D_X,
\end{equation*}
and the following estimate which holds under Assumption \ref{ass2}
$$
    \mathbb{E}\left[\left(2\frac{(\hat{\theta}_t-\theta_{\text{true}})^TP_{t+1}y_tX_t}{1+e^{y_t\hat{\theta}_t^TX_t}}\right)^2\right] \le (4D_{\theta})^2 \mathbb{E}\left[X_t^TP_{t+1}^2X_t\right]     \le (4D_{\theta})^2 \frac{M_2  }{t^{2}}.
$$
Summing the terms in the binomial expansion of order $k$ we get
\begin{equation*}
    \mathbb{E}\left[\left(2\frac{(\hat{\theta}_t-\theta_{\text{true}})^TP_{t+1}y_tX_t}{1+e^{y_t\hat{\theta}_t^TX_t}}\right)^2 \sum\limits_{i=2}^k \binom{k}{i} \|\hat{\theta}_t-\theta_{\text{true}}\|^{2(k-i)}\left(2\frac{(\hat{\theta}_t-\theta_{\text{true}})^TP_{t+1}y_tX_t}{1+e^{y_t\hat{\theta}_t^TX_t}}\right)^{i-2}\right]
    \le \frac{b_{2,k}}{t^{2}},
\end{equation*}
with
\begin{align*}
    b_{2,k}=(4D_{\theta})^2 M_2 \frac{1}{(2p_1D_{\theta}D_X)^2}(4D_{\theta}^2+2p_1D_{\theta}D_X)^k  =  \frac{4M_2}{p_1^2D_X^2}(4D_{\theta}^2+2p_1D_{\theta}D_X)^k.
\end{align*}
Hence we have
\begin{equation*}
    \mathbb{E}\left[\|\hat{\theta}_{t+1}-\theta_{\text{true}}\|^{2k} \right] \le \mathbb{E}\left[\|\hat{\theta}_t-\theta_{\text{true}}\|^{2k}\right] + 2k\mathbb{E}\left[ \|\hat{\theta}_t-\theta_{\text{true}}\|^{2(k-1)}  \frac{(\hat{\theta}_t-\theta_{\text{true}})^TP_{t+1}y_tX_t}{1+e^{y_t\hat{\theta}_t^TX_t}}\right] + \frac{b_k}{t^{2}},
\end{equation*}
with $b_k\ge b_{1,k}+b_{2,k}$. We then apply Proposition \ref{propertyexpectedbound} deriving
\begin{equation*}
    \mathbb{E}_t\left[\frac{y_t}{1+e^{y_t\hat{\theta}_t^TX_t}} \right] = - \frac{X_t^T(\hat{\theta}_t-\theta_{\text{true}})}{(1+e^{\hat{\theta}_t^TX_t})(1+e^{-\hat{\theta}_t^TX_t})}c_t\,,
\end{equation*}
with $e^{-D}<c_t <e^D$ and the tower property in order to obtain
\begin{multline*}
    \mathbb{E}\left[\|\hat{\theta}_{t+1}-\theta_{\text{true}}\|^{2k} \right] \le \mathbb{E}\left[\|\hat{\theta}_t-\theta_{\text{true}}\|^{2k}\right]\\
    - \frac{kc_t}{1+e^D} \mathbb{E}\left[\|\hat{\theta}_t-\theta_{\text{true}}\|^{2(k-1)}(\hat{\theta}_t-\theta_{\text{true}})^T\left( P_{t+1}X_tX_t^T \right)(\hat{\theta}_t-\theta_{\text{true}})\right] + \frac{b_k}{t^{2}}\,.
\end{multline*}
Then Assumption \ref{ass1} applied thanks to the tower property yields
\begin{equation*}
    \mathbb{E}\left[\|\hat{\theta}_{t+1}-\theta_{\text{true}}\|^{2k}\right] \le \mathbb{E}\left[\|\hat{\theta}_t-\theta_{\text{true}}\|^{2k}\right]\left(1-\frac{e^{-D}k  m_1}{t(1+e^D)}\right) + \frac{b_k}{t^{2}}.
\end{equation*}
\end{proof}

\begin{proof}[Proof of Corollary \ref{boundpowerk}.]
Defining $l_t=\mathbb{E}\left[\|\hat{\theta}_t-\theta_{\text{true}}\|^{2k}\right]$ and according to Lemma \ref{lemmarecursion} we have the inequality
\begin{equation*}
    l_{t+1} \le l_t\Big(1-\frac{ka}{t}\Big) + \frac{b_k}{t^{2}},\qquad t\ge 1\,.
\end{equation*}
By a recursive argument it yields to the estimate for $t\ge 2$
\begin{equation*}
    l_{t} \le \sum\limits_{\tau=1}^{t-1}\frac{b_k}{\tau^{2}}\prod\limits_{s=\tau+1}^{t-1}\Big(1-\frac{ka}{s}\Big) + l_1\prod\limits_{s=1}^{t-1}\Big(1-\frac{ka}{s}\Big) \le \sum\limits_{\tau=1}^{t-1}\frac{b_k}{\tau^{2}}\prod\limits_{s=\tau+1}^{t-1}\Big(1-\frac{ka}{s}\Big),
\end{equation*}
because $l_1>0$, $1-ka<0$ and for $s>1$, $1-\frac{ka}{s}>0$. Moreover, taking the logarithm of the products, we estimate
\begin{equation*}
    \sum\limits_{s=\tau+1}^{t-1}\log\Big(1-\frac{ka}{s}\Big) \le -ka \sum\limits_{s=\tau+1}^{t-1}\frac{1}{s} \le -ka \int\limits_{\tau+1}^t \frac{du}{u} = ka\left(\log(\tau+1)-\log(t)\right)\,.
\end{equation*}
It provides the bound
\begin{equation*}
    \prod\limits_{s=\tau+1}^{t-1}(1-\frac{ka}{s}) \le \frac{(\tau+1)^{ka}}{t^{ka}},
\end{equation*}
yielding the estimate
\begin{equation*}
    l_t \le \frac{b_k}{t^{ka}} \sum\limits_{\tau=1}^{t-1} (\tau+1)^{ka-2}\left(\frac{\tau+1}{\tau}\right)^2 \le \frac{4b_k}{t^{ka}} \sum\limits_{\tau=1}^{t-1} (\tau+1)^{ka-2}.
\end{equation*}
As $-1<ka-2<0$, we infer that
\begin{equation*}
    \sum\limits_{\tau=1}^{t-1} (\tau+1)^{ka-2} \le \int\limits_1^t u^{ka-2}du = \frac{1}{ka-1}(t^{ka-1}-1),
\end{equation*}
so that
\begin{equation*}
    l_{t} \le \frac{4b_k}{ka-1}(\frac{1}t -\frac{1}{t^{ka}}),
\end{equation*}
and Corollary \ref{boundpowerk} follows for $t\ge 2$.
\end{proof}

\end{document}